\documentclass{article}
\usepackage{iclr2027_conference,times}

\usepackage{amsmath,amsfonts,bm}

\def\eqref#1{equation~\ref{#1}}

\def\1{\bm{1}}

\DeclareMathAlphabet{\mathsfit}{\encodingdefault}{\sfdefault}{m}{sl}
\SetMathAlphabet{\mathsfit}{bold}{\encodingdefault}{\sfdefault}{bx}{n}

\usepackage{amsmath,amssymb,booktabs,multirow,graphicx,url}
\usepackage{colortbl,tabularx}
\usepackage{algorithm}
\usepackage{algpseudocode}
\usepackage{threeparttable}
\usepackage{graphicx}
\usepackage{wrapfig}
\usepackage{hyperref}

\newcommand{\method}{\textsc{Loopy}}

\newcommand{\pagelog}[1]{\typeout{LOOPY-SECTION-PAGE: #1 = \thepage}}
\title{\textsc{Loopy}: Low-Bit Quantization Framework for Looped Language Models}
\iclrfinalcopy
\author{%
Zeyu LI$^{1}$ \quad Yipu ZHANG$^{1}$ \quad Jintao Cheng$^{1}$ \quad Xin LI$^{2}$ \quad Wei ZHANG$^{1}$\\
\normalfont $^{1}$The Hong Kong University of Science and Technology\\
\normalfont $^{2}$Duke Kunshan University
}
\newtheorem{proposition}{Proposition} 
\begin{document}
\maketitle
\lhead{Preprint}
\suppressfloats[t]
\begin{abstract}
Looped language models provide a parameter-efficient way to scale iterative test-time computation by repeatedly executing a shared recurrent core. Post-training quantization (PTQ) can reduce the memory footprint and inference cost of looped language models, but errors introduced by a quantized shared core affect subsequent cores. Among PTQ methods, channel scaling and orthogonal rotations preserve the floating-point computation while producing representations with different quantization quality. We find that quantization configuration candidate rankings can change with recurrent depth, motivating configuration selection at the target deployment depth. However, evaluating every candidate over the full calibration set at this depth is costly. We therefore propose \method{}, a PTQ framework that formulates shared-core quantization through a recurrent-depth-aware objective, selecting shared low-bit representations by their final prediction loss at the target deployment depth. Channel scaling and orthogonal rotations parameterize the candidate representations. To approximately solve this selection problem efficiently, \method{} progressively allocates calibration windows to promising candidates while preserving complete target-depth execution, using only forward evaluations. Across eight settings, \method{} achieves the state-of-the-art results among different baselines. On Ouro-1.4B under W4A4, \method{} reduces LAMBADA perplexity by 36.5\% relative to SpinQuant. Our code is available at \url{https://github.com/Shameless0817/Loopy-review.git}.

\label{abstract-end}
\end{abstract}

\section{Introduction}
\label{sec:introduction}

Looped language models increase effective depth by repeatedly executing a shared recurrent core composed of transformer blocks, enabling iterative computation without storing a separate set of weights for every recurrent step as shown in Figure \ref{fig:shared-core} \citep{dehghani2019universal,zhu2025scaling,saunshi2025reasoning,geiping2026scaling}. Alongside token pruning \citep{cheng2026trainingfree}, PTQ enables efficient deployment through low-bit weights and activations \citep{xiao2023smoothquant,ashkboos2024quarot,liu2025spinquant}. Applying PTQ to looped language models therefore offers an effective solution to low-bit computation while preserving prediction quality.

\begin{wrapfigure}{r}{0.45\textwidth}
    \centering
    \includegraphics[width=\linewidth]{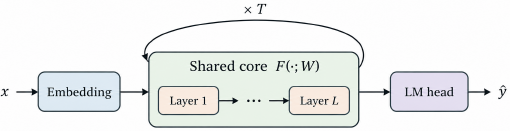}
    \caption{Looped language model.}
    \label{fig:shared-core}
\end{wrapfigure}


Channel scaling and orthogonal rotations mitigate quantization outliers while preserving the floating-point computation \citep{xiao2023smoothquant,lin2024awq,ashkboos2024quarot,liu2025spinquant}. However, these transformations do not uniquely determine a good quantization configuration \citep{liu2025spinquant}.


To preserve parameter sharing, we use the same quantized weights and transformations across recurrent steps (Section~\ref{sec:3_1}). As hidden states evolve across recurrent steps (Figure~\ref{fig:activation-steps}), quantization errors propagate through subsequent computation. Consequently, smaller local reconstruction error need not imply lower final prediction loss (Appendix~\ref{app:recurrent-error}). LoopQ also considers recurrent dynamics through loop-aware quantization~\citep{fang2026loopq}. These observations motivate target-depth evaluation, supported by our calibration-depth ablations (Figure~\ref{fig:calibration-depth}). However, evaluating every candidate on all calibration windows at this depth is costly.



Motivated by these observations, we introduce \method{}, a recurrent-depth-aware PTQ framework for looped language models, as shown in Figure~\ref{fig:loopy-overview}. The key idea is to formulate shared-core quantization as representation selection under a target-depth prediction objective: a shared quantized representation is evaluated by its final prediction loss after complete execution at the intended recurrent depth. Channel scaling and orthogonal rotations define the candidate representation space. \method{} approximately solves the resulting selection problem through progressive full-depth evaluation, keeping the recurrent depth fixed while allocating additional calibration windows to surviving candidates. This procedure uses only forward evaluations, and the selected quantized weights and transformations remain shared across recurrent steps.

\begin{figure}[t]
\centering
\includegraphics[width=\linewidth]{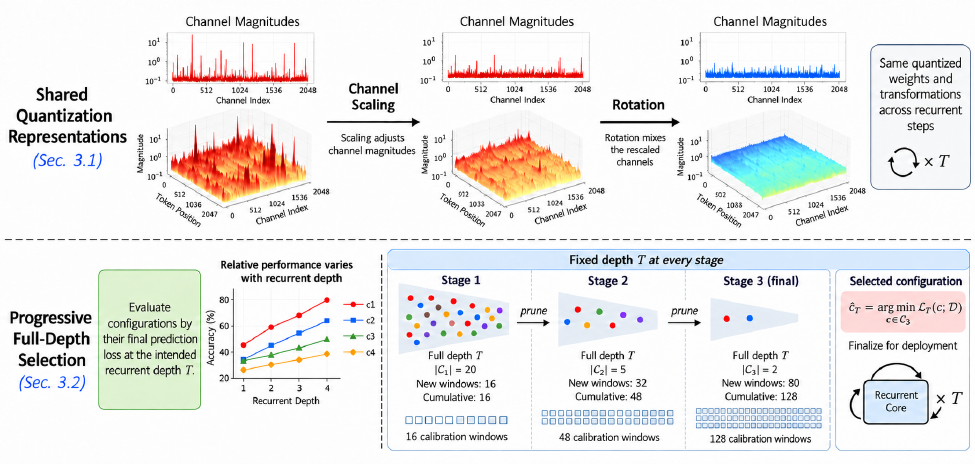}
\caption{\textbf{Loopy method overview.}
Top: channel scaling and rotation define quantized representations shared across recurrent steps.
Bottom: progressive selection allocates more calibration windows to surviving candidates while preserving the target depth $T$.}
\label{fig:loopy-overview}
\end{figure}

We evaluate \method{} on three representative looped language models: the Ouro series \citep{zhu2025scaling}, Huginn \citep{geiping2026scaling}, and Recurrent-Llama \citep{mcleish2025teaching}, under both W4A8 and W4A4 quantization. Our main contributions are summarized as follows.

\begin{itemize}
\item We systematically study low-bit quantization in looped language models. We uncover a loop-specific quantization phenomenon: a quantization configuration shared across recurrent steps can exhibit depth-dependent quality because of the recurrent dependency of looped models. This serves as the motivation for our method \method{}.

\item We introduce \method{}, a post-training quantization framework centered on a recurrent-depth-aware objective for selecting shared low-bit representations. \method{} parameterizes these representations through channel scaling and orthogonal rotations, and approximately solves the selection problem through progressive full-depth evaluation. This procedure preserves the target recurrent depth while reducing calibration work by allocating additional windows only to surviving candidates.

\item We evaluate \method{} across multiple looped language model families under W4A4 and W4A8 quantization. On Ouro-1.4B under W4A4, \method{} reduces LAMBADA perplexity by 36.5\% relative to the adapted SpinQuant baseline. We further evaluate \method{} with low-bit KV quantization and show that Q/K Hadamard transforms substantially mitigate the degradation from 4-bit KV quantization.
\end{itemize}

\section{Motivation}
\setcounter{topnumber}{1}
\pagelog{Motivation}
\label{sec:prelim}
 In a looped language model, a shared recurrent core repeatedly updates the hidden state according to
\[
h_{t+1}=F(h_t;W),\qquad t=0,\ldots,T-1,
\]
where $W$ denotes the shared weights and $T$ is the recurrent depth. After quantization, the corresponding computation becomes $\hat h_{t+1}=\hat F_c(\hat h_t)$, where $c$ specifies the quantized weights and transformation settings of the core's linear operators. The configuration $c$ is shared across recurrent steps, while hidden states and dynamic activation scales may vary. Appendix~\ref{app:activation-steps} illustrates activation changes across recurrent calls.

\subsection{Rotation Mitigates Activation Outliers}
We examine how rotation affects activation quantization on Ouro-1.4B. Figure~\ref{fig:rotation-benefit}(a) shows reduced activation peaks in the displayed block's attention and FFN inputs. We measure A4 reconstruction error using normalized mean-squared error (NMSE), $\|Q(a)-a\|_2^2/\|a\|_2^2$, where $Q$ denotes activation quantization. Figure~\ref{fig:rotation-benefit}(b) shows lower median NMSE after rotation for both inputs, motivating the shared rotation transforms in Section~\ref{sec:3_1}.

\begin{figure}[t]
\centering
\begin{minipage}[b]{0.73\linewidth}
\centering
\includegraphics[width=\linewidth]{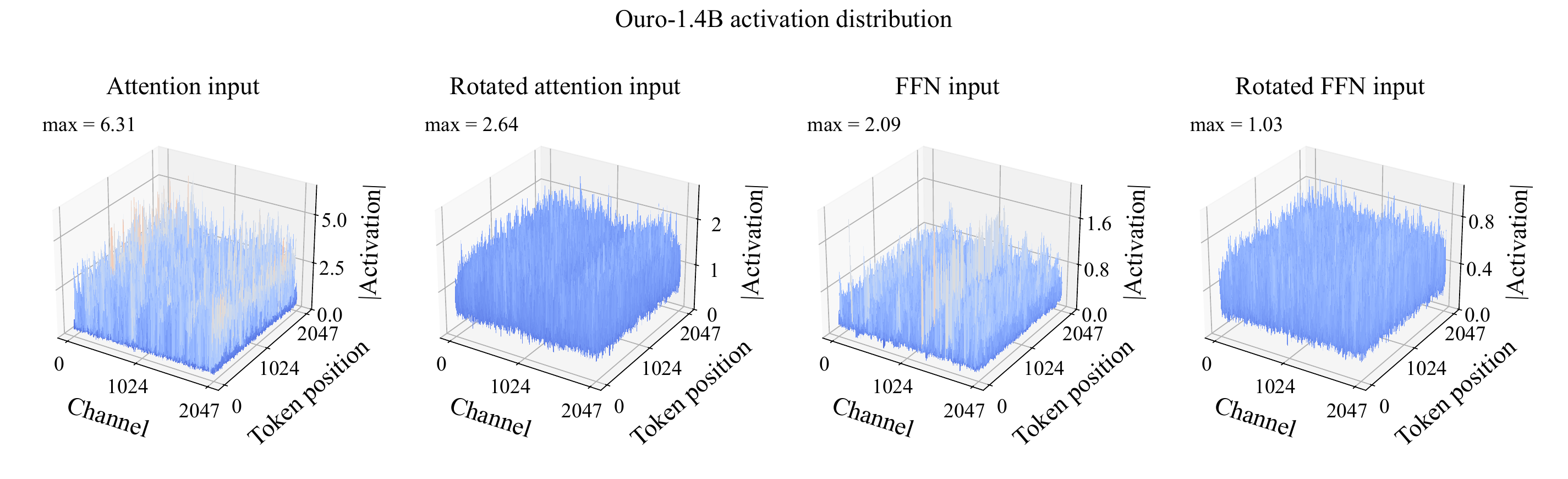}
\par{\small (a) Activation magnitudes.\par}
\end{minipage}\hfill
\begin{minipage}[b]{0.25\linewidth}
\centering
\includegraphics[width=\linewidth]{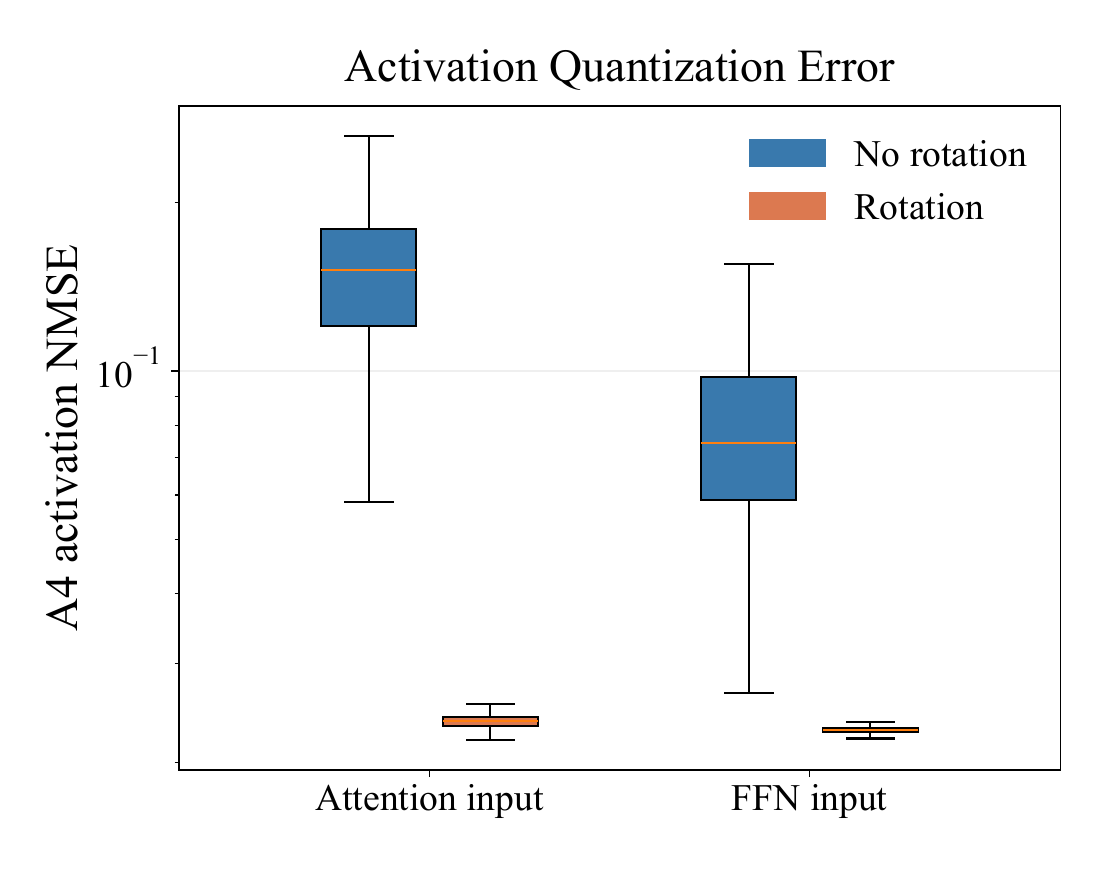}
\par{\small (b) A4 NMSE.\par}
\end{minipage}
\caption{\textbf{Rotation improves activation-side quantization on Ouro-1.4B.} (a) Attention and feed-forward network (FFN) inputs of physical block 22 at recurrent step 4, before and after rotation. Appendix~\ref{app:activation-distributions} provides all 24 blocks. (b) Activation normalized mean-squared error (NMSE) distributions without and with rotation, on a logarithmic scale.}
\label{fig:rotation-benefit}
\end{figure}

\subsection{Configuration Quality Depends on Recurrent Depth}
To examine whether configuration quality depends on recurrent depth, we evaluate 20 rotation--scaling configurations on Ouro-1.4B W4A4 at depths $T=1,\ldots,4$. Figure~\ref{fig:accuracy-depth} shows changes in their relative WinoGrande performance across depths, with the corresponding rankings provided in Appendix~\ref{app:configuration-rankings}.

\begin{figure}[t]
\centering
\includegraphics[width=\linewidth]{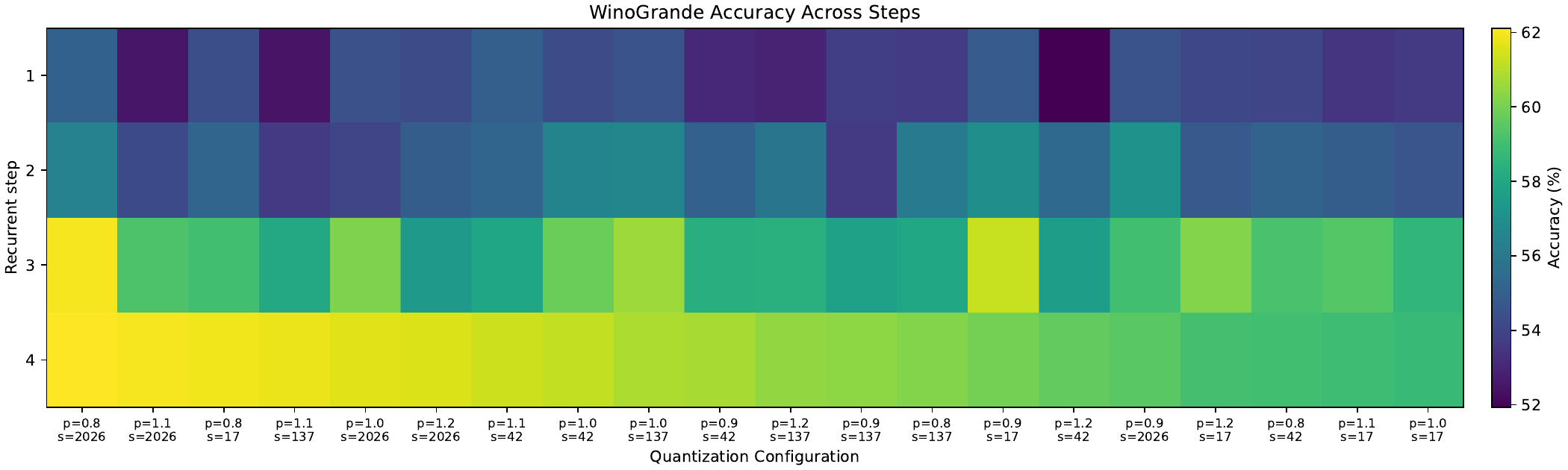}
\caption{\textbf{Accuracy across recurrent depths.} WinoGrande accuracy (\%) for 20 configurations of Ouro-1.4B W4A4. Columns identify scaling power $p$ and rotation seed $s$ and are ordered by decreasing accuracy at $T=4$. Rows show recurrent depth. A shared color scale preserves absolute differences across depths. Appendix~\ref{app:configuration-rankings} shows the corresponding within-depth rankings.}
\label{fig:accuracy-depth}
\end{figure}

Together with the recurrent error analysis in Appendix~\ref{app:tangent}, these observations motivate selecting shared representations by their target-depth prediction loss, formalized in Section~\ref{sec:method}.

\section{Method}
\pagelog{Method}
\label{sec:method}

\method{} formulates shared-core quantization as recurrent-depth-aware representation selection. Each candidate specifies quantized weights and transformations that remain shared across recurrent steps, while activation distributions evolve with the hidden state. We evaluate each candidate through complete execution at the intended deployment depth $T$. Given a candidate space $\mathcal{C}$ and calibration windows $\mathcal{D}$, we define the recurrent-depth-aware quantization objective as
\begin{equation}
 c_T^\star \in \arg\min_{c\in\mathcal C}\mathcal L_T(c;\mathcal D),
 \qquad
 \mathcal L_T(c;\mathcal D)=\frac{1}{|\mathcal D|}\sum_{x\in\mathcal D}L_T(c,x),
 \label{eq:selection-objective}
\end{equation}
Here, $L_T(c,x)$ denotes the mean next-token negative log-likelihood (NLL) on window $x$ after $T$ recurrent steps. This objective evaluates the quality of a shared quantized representation through its complete target-depth recurrent computation. Channel scaling and orthogonal rotations parameterize the candidate space, while progressive full-depth selection provides an efficient solution.

\subsection{Shared Quantization Representations}
\label{sec:3_1}
Each candidate $c=(s,p,b)$ specifies a rotation seed, a channel-scaling power, and a Hadamard block-size setting. The same configuration determines operator-specific transforms $D_{\ell,p}$ and $R_{\ell,s,b}$, which remain fixed across recurrent calls. For each linear operator $\ell$, the paired transforms satisfy
\begin{equation}
 xW_\ell^\top = \left(xD_{\ell,p}^{-1}R_{\ell,s,b}\right)\left(W_\ell D_{\ell,p}R_{\ell,s,b}\right)^\top.
 \label{eq:shared-representation}
\end{equation}
In exact arithmetic, these transforms preserve the recurrent hidden-state trajectory (Appendix~\ref{app:shared-transform-equivalence}). Quantization generally breaks this equivalence, motivating selection by final prediction quality.

For each recurrent linear operator $\ell$, we obtain a positive channel-scaling vector $d_\ell$ from calibration and construct
\begin{equation}
 D_{\ell,p}=\operatorname{diag}\!\left(\frac{d_\ell^{\,p}}{\sqrt{\max(d_\ell^{\,p})\min(d_\ell^{\,p})}}\right),
 \label{eq:scale-power}
\end{equation}
\begin{equation}
 R_{\ell,s,b}=S_{\ell,s}^{\mathrm{pre}}\operatorname{blkdiag}(H_b,\ldots,H_b)S_{\ell,s}^{\mathrm{post}},
 \label{eq:candidate}
\end{equation}
where powers are applied elementwise, $H_b$ is a normalized Hadamard matrix of size $b$, and $S_{\ell,s}^{\mathrm{pre}}$ and $S_{\ell,s}^{\mathrm{post}}$ are diagonal sign matrices generated deterministically from the seed and operator identity. We use randomized Hadamard rotations~\citep{ashkboos2024quarot}, restricting $b$ to supported powers of two that divide the operator's input dimension. Search varies $(s,p,b)$ while keeping the calibrated scaling vectors fixed.

For each candidate, we quantize the transformed weights and dynamically quantize the transformed inputs, yielding
\begin{equation}
 \widehat f_{\ell,c}(x)=Q_a\!\left(xD_{\ell,p}^{-1}R_{\ell,s,b}\right)Q_w\!\left(W_\ell D_{\ell,p}R_{\ell,s,b}\right)^\top,
 \label{eq:quantized-operator}
\end{equation}
Here, $Q_w$ and $Q_a$ denote weight and activation quantization. Weight quantization is fixed for each candidate, whereas activation scales are computed dynamically per token. The transforms act locally at linear operators, leaving residual-stream coordinates unchanged and requiring no step-specific weight copies. Figure~\ref{fig:operator-transforms} illustrates these paired transforms in attention and FFN operators. 

\begin{figure}[t] \centering \includegraphics[width=0.75\linewidth]{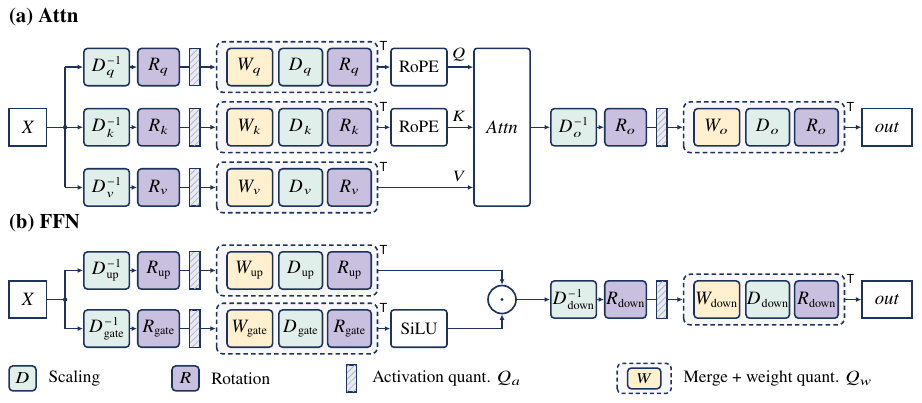} \caption{\textbf{Operator-wise quantization transforms}. Channel scaling and rotation in attention and FFN, with weight-side transforms merged before quantization and input transforms applied online.} 
\label{fig:operator-transforms} 
\end{figure} 

\subsection{Progressive Full-Depth Selection}
Using fewer calibration windows estimates the same target-depth objective, whereas reducing recurrent depth changes the computation being evaluated. \method{} therefore progressively allocates windows to surviving candidates while preserving complete target-depth execution (Figure~\ref{fig:loopy-overview}).\par

Following successive halving~\citep{jamieson2016nonstochastic}, we construct nested calibration subsets $\mathcal D_1\subset\cdots\subset\mathcal D_R=\mathcal D$ and initialize $\mathcal C_1=\mathcal C$. At stage $r$, all active candidates are evaluated on the same windows through complete $T$-step execution:
\begin{equation}
 \widehat{\mathcal L}_r(c)=\frac{1}{|\mathcal D_r|}\sum_{x\in\mathcal D_r}L_T(c,x),\qquad c\in\mathcal C_r.
 \label{eq:progressive-evaluation}
\end{equation}
We reuse previously computed losses and evaluate only newly added windows at each stage.

After each intermediate stage, we retain the $K_{r+1}$ candidates with the lowest estimated loss,
\begin{equation}
 \mathcal C_{r+1}=\operatorname{Lowest}_{K_{r+1}}\!\left(\mathcal C_r;\widehat{\mathcal L}_r\right),\qquad r=1,\ldots,R-1,
 \label{eq:candidate-retention}
\end{equation}
where the sample budgets and retained candidate counts are specified before search. All candidates within a stage are compared on the same calibration windows, and ties are resolved using a fixed candidate order. As the active set shrinks, additional windows refine the comparison among the remaining configurations. At the final stage, every surviving candidate is evaluated on the full calibration set, and we select
\begin{equation}
 \hat c_T=\arg\min_{c\in\mathcal C_R}\mathcal L_T(c;\mathcal D).
 \label{eq:final-selection}
\end{equation}
Early elimination may discard the full-search optimum. A sufficient condition for preserving it is that, at each screening stage, the full-set loss gap between the optimum and candidates beyond the retention cutoff exceeds twice the maximum estimation error (Appendix~\ref{app:optimal-candidate-preservation}). This condition motivates allocating more windows to refine candidate comparisons while retaining enough candidates to accommodate uncertainty. We assess the resulting approximation empirically through selection regret and prediction quality on separate evaluation data.

Let $n_r=|\mathcal D_r|$, with $n_0=0$. By reusing previously computed losses, \method{} requires
\begin{equation}
 N_{\mathrm{eval}}=\sum_{r=1}^{R}|\mathcal C_r|(n_r-n_{r-1})
 \label{eq:evaluation-cost}
\end{equation}
candidate--window evaluations, compared with $|\mathcal C|\,|\mathcal D|$ for exhaustive search. This count measures evaluation work, and wall-clock cost also includes candidate construction and scheduling. At deployment, the selected transformed weights are precomputed and quantized, while input transforms and dynamic activation quantization are applied during execution. Algorithm~\ref{alg:loopy-selection} summarizes the selection procedure. In Algorithm~\ref{alg:loopy-selection}, \texttt{BuildOrRestore} constructs or restores candidates from fixed calibration bases, and \texttt{EvalFullDepth} sums window-level mean NLLs after $T$ recurrent steps.

\begin{algorithm}[t]
\caption{Loopy Progressive Full-Depth Selection}
\label{alg:loopy-selection}
\begin{algorithmic}[1]
\Require $\mathcal C$, $T$, nested subsets $\{\mathcal D_r\}_{r=1}^{R}$, retained counts $\{K_r\}_{r=2}^{R}$
\State $\mathcal C_1\gets\mathcal C$, $\mathcal D_0\gets\varnothing$, $S(c)\gets0$ for all $c\in\mathcal C$
\For{$r=1,\ldots,R$}
    \For{$c\in\mathcal C_r$}
        \State $\widehat F_c\gets\Call{BuildOrRestore}{c}$
        \State $S(c)\gets S(c)+\Call{EvalFullDepth}{\widehat F_c,T,\mathcal D_r\setminus\mathcal D_{r-1}}$
        \State $\widehat{\mathcal L}_r(c)\gets S(c)/|\mathcal D_r|$
    \EndFor
    \If{$r<R$}
        \State $\mathcal C_{r+1}\gets\Call{KeepLowest}{\mathcal C_r,\widehat{\mathcal L}_r,K_{r+1}}$
    \EndIf
\EndFor
\State $\hat c_T\gets\Call{Best}{\mathcal C_R,\widehat{\mathcal L}_R}$
\State \Return $\Call{BuildOrRestore}{\hat c_T}$
\end{algorithmic}
\end{algorithm}

\subsection{Target-Depth Selection Analysis}
\label{sec:selection-analysis}

Recurrent error propagation can reverse candidate rankings across depths, even under shared dynamics (Appendix~\ref{app:contractive-counterexample}). We therefore distinguish depth mismatch from finite-window estimation error.

Fix the candidate pool $\mathcal C$ and calibration set $\mathcal D$,
and abbreviate $\mathcal L_t(c;\mathcal D)$ as $\mathcal L_t(c)$.
Define the pairwise depth mismatch
\begin{equation}
\beta_{\tau,T}
=
\max_{c,c'\in\mathcal C}
\left|
[\mathcal L_\tau(c)-\mathcal L_\tau(c')]
-
[\mathcal L_T(c)-\mathcal L_T(c')]
\right|.
\label{eq:depth-mismatch}
\end{equation}
This measures changes in candidate comparisons rather than
candidate-independent loss shifts.

\noindent\textbf{Selection-regret bound.}
Consider nested top-$K$ screening of fixed candidates, with stage
$r<R$ evaluated at depth $\tau_r$. Suppose
$\max_{c\in\mathcal C_r}
|\widehat{\mathcal L}_r(c)-\mathcal L_{\tau_r}(c)|
\leq\varepsilon_r$,
and final selection minimizes $\mathcal L_T$ over the survivors
using all of $\mathcal D$. Then
\begin{equation}
\begin{aligned}
\operatorname{Reg}_T(\hat c_T)
&:= \mathcal L_T(\hat c_T)
-\min_{c\in\mathcal C}\mathcal L_T(c) \\
&\leq \max_{r<R}
\bigl(\beta_{\tau_r,T}+2\varepsilon_r\bigr).
\end{aligned}
\label{eq:selection-regret-bound}
\end{equation}
The bound does not require a unique optimum or its survival.
If the last global optimum is discarded, every candidate retained
at that stage satisfies the bound, as do all subsequent survivors.

For \method{}, $\tau_r=T$ and $\beta_{T,T}=0$, giving
\begin{equation}
\operatorname{Reg}_T(\hat c_T)
\leq 2\max_{r<R}\varepsilon_r.
\label{eq:full-depth-regret}
\end{equation}
Full-depth screening removes depth mismatch, leaving window-estimation error. This conditional bound applies to the fixed calibration set. Our schedule does not certify $\varepsilon_r$, and the bound guarantees neither held-out generalization nor superiority at equal compute. Appendix~\ref{app:selection-regret} proves the regret bound, and Appendix~\ref{app:optimal-candidate-preservation} gives a sufficient condition for exact optimum recovery.

\section{Experiments}
\pagelog{Experiments}\label{sec:experiments}
\subsection{Setup}\label{sec:setup}
\textbf{Models and evaluation.}
We evaluate Ouro-1.4B and Ouro-2.6B at recurrent depth $T=4$~\citep{zhu2025scaling}, Huginn-0125 at $T=32$~\citep{geiping2026scaling}, and Recurrent-Llama-3.2-1B at $T=16$~\citep{mcleish2025teaching}.
We report perplexity on WikiText-2~\citep{merity2016pointer} and LAMBADA~\citep{paperno2016lambada}, and zero-shot accuracy on HellaSwag~\citep{zellers2019hellaswag}, WinoGrande~\citep{sakaguchi2020winogrande}, LAMBADA, ARC-Challenge~\citep{clark2018arc}, and MMLU~\citep{hendrycks2021mmlu}. Avg. denotes the unweighted mean of the five task accuracies.

\textbf{Quantization and baselines.} We evaluate W4A4 and W4A8 against recurrent adaptations of AWQ, SmoothQuant, QuaRot, SpinQuant, and FlatQuant~\citep{lin2024awq,xiao2023smoothquant,ashkboos2024quarot,liu2025spinquant,sun2024flatquant}. \method{} and all evaluated baselines uses symmetric group-32 quantization and symmetric per-token activation quantization. Baseline-specific quantization settings and calibration procedures are detailed in Appendix~\ref{app:protocol}. LoopQ comparison uses the same Loopy results as the main table and published LoopQ results~\citep{fang2026loopq}.

\textbf{Calibration and selection.}
\method{} selects shared rotation and scaling configurations using progressive full-depth evaluation (Section~\ref{sec:method}). Calibration bases remain fixed across candidates; nested window subsets and retained candidate counts are specified before search.
All \method{} results in Table~\ref{tab:main-results} and the full-results appendix are obtained using configurations selected by the progressive full-depth procedure described in Section~\ref{sec:method}.

\textbf{Implementation.}
Main accuracy results use quantization and dequantization in BF16 with KV caching disabled. We implement packed low-bit inference in Triton~\citep{tillet2019triton}, with hardware benchmarks described in Section~\ref{sec:deployment}.

\subsection{Main Results}
\begin{table}[t]
\caption{\textbf{W4A8 and W4A4 results on looped language models.} WT2 and LMB denote WikiText-2 and LAMBADA perplexity ($\downarrow$); Avg. is mean task accuracy (\%, $\uparrow$). Detailed accuracy results are shown in Tables~\ref{tab:w4a8-full} and~\ref{tab:w4a4-full}. Bold indicates the best low-bit result per model and precision.}
\label{tab:main-results}\label{tab:w4a8}\label{tab:w4a4}
\centering\footnotesize\setlength{\tabcolsep}{1.5pt}
\begin{tabularx}{\textwidth}{@{}l*{4}{rr>{\raggedleft\arraybackslash}X}@{}}
\toprule
& \multicolumn{3}{c}{Ouro-2.6B} & \multicolumn{3}{c}{Ouro-1.4B} & \multicolumn{3}{c}{Huginn-0125} & \multicolumn{3}{c}{Recurrent-Llama-3.2-1B} \\
\cmidrule(lr){2-4}\cmidrule(lr){5-7}\cmidrule(lr){8-10}\cmidrule(l){11-13}
Method & WT2 & LMB & Avg. & WT2 & LMB & Avg. & WT2 & LMB & Avg. & WT2 & LMB & Avg. \\
\midrule
FP ref. & 10.51 & 4.23 & 69.65 & 12.03 & 5.29 & 64.66 & 13.21 & 7.30 & 50.42 & 20.72 & 17.23 & 38.63 \\
\midrule
\multicolumn{13}{l}{\textbf{W4A8}} \\  
AWQ & 11.46 & 5.22 & 66.45 & 13.64 & 7.68 & 60.51 & 14.14 & 8.85 & 47.37 & 28.16 & 50.70 & 35.10 \\
SmoothQuant & 19.44 & 15.20 & 56.15 & 53.27 & 134.60 & 37.78 & --- & --- & 25.82 & 243.77 & --- & 26.40 \\
QuaRot & 11.76 & 5.30 & 66.76 & 14.25 & 8.29 & 59.09 & --- & --- & 25.61 & 26.03 & 31.15 & 36.60 \\
SpinQuant & 12.07 & 5.56 & 66.21 & 14.18 & 7.52 & 60.42 & --- & --- & 24.75 & 25.96 & 28.42 & 36.75 \\
FlatQuant & 14.52 & 8.14 & 62.42 & 26.55 & 23.50 & 46.21 & --- & --- & 24.43 & 86.73 & --- & 28.22 \\
\rowcolor[gray]{0.93}
Loopy & \textbf{10.97} & \textbf{4.46} & \textbf{68.64} & \textbf{12.77} & \textbf{6.00} & \textbf{63.00} & \textbf{13.68} & \textbf{7.98} & \textbf{49.21} & \textbf{24.76} & \textbf{26.97} & \textbf{37.23} \\
\midrule
\multicolumn{13}{l}{\textbf{W4A4}} \\  
AWQ & 340.35 & --- & 26.0 & 645.60 & --- & 25.6 & 168.39 & 117.12 & 34.2 & --- & --- & 25.0 \\
SmoothQuant & --- & --- & 24.5 & --- & --- & 25.2 & --- & --- & 25.3 & --- & --- & 25.5 \\
QuaRot & 15.27 & 9.73 & 56.9 & 19.16 & 22.18 & 44.7 & --- & --- & 25.3 & 48.08 & 139.25 & 31.5 \\
SpinQuant & 16.62 & 8.18 & 60.9 & 17.45 & 12.86 & 52.1 & --- & --- & 25.0 & \textbf{41.03} & \textbf{102.34} & \textbf{32.5} \\
FlatQuant & --- & --- & 24.6 & --- & --- & 25.0 & --- & --- & 25.0 & --- & --- & 25.1 \\
\rowcolor[gray]{0.93}
Loopy & \textbf{12.35} & \textbf{6.20} & \textbf{64.37} & \textbf{14.73} & \textbf{8.17} & \textbf{57.75} & \textbf{28.09} & \textbf{36.36} & \textbf{37.32} & 42.08 & 110.83 & 32.0 \\
\bottomrule
\end{tabularx}\par\smallskip {\raggedright\footnotesize --- denotes perplexity greater than $10^3$.\par}
\end{table}
\textbf{W4A8.}  In Table~\ref{tab:main-results}, \method{} achieves the lowest WikiText-2 and LAMBADA perplexity and the highest available average accuracy among the evaluated low-bit methods on all four checkpoints.  \par\textbf{W4A4.} \method{} achieves the lowest perplexity on both datasets for the two Ouro checkpoints and Huginn-0125. Average accuracy exceeds the evaluated low-bit baselines on these three checkpoints. Recurrent-Llama exhibits greater degradation under W4A4 than W4A8, compared SpinQuant under W4A4, which is one of our work's limitation. Its inherited Llama pre-normalization structure differs from the normalization schemes used by Ouro and Huginn~\citep{mcleish2025teaching,zhu2025scaling,geiping2026scaling}, potentially affecting the propagation of quantization perturbations through recurrent execution. \par 
\textbf{Comparison with LoopQ.}
All LoopQ results are taken from its original paper~\citep{fang2026loopq}, while Loopy results are reused from Table~\ref{tab:main-results}. Table~\ref{tab:loopq-comparison} shows that Loopy achieves lower W4A4 WikiText-2 perplexity and higher five-task average accuracy on both Ouro checkpoints.

\begin{table}[t]
\centering\small
\caption{W4A4 comparison with LoopQ. $\dagger$: published results~\citep{fang2026loopq}; Avg. Acc. averages the five tasks in Table~\ref{tab:main-results}.}
\label{tab:loopq-comparison}
\begin{tabular*}{0.85\linewidth}{@{\extracolsep{\fill}}lrrrr@{}}
\toprule
& \multicolumn{2}{c}{WT2 PPL $\downarrow$} & \multicolumn{2}{c}{Avg. Acc. (\%) $\uparrow$} \\ \cmidrule(lr){2-3}\cmidrule(lr){4-5} Model & LoopQ$^{\dagger}$ & \method{} & LoopQ$^{\dagger}$ & \method{} \\
\midrule
Ouro-1.4B & 15.81 & \textbf{14.73} & 57.08 & \textbf{57.75} \\
Ouro-2.6B & 15.89 & \textbf{12.35} & 58.13 & \textbf{64.37} \\
\bottomrule
\end{tabular*}
\end{table}

\subsection{Selection Quality–Calibration Cost Trade-off}
\label{sec:quality_cost_new}
We compare LOOPY with full-depth uniform selection at evaluation depth \(T\), shallow progressive selection at \(T/2\), and shallow-to-full selection with evaluation depths \(T/4 \rightarrow T/2 \rightarrow T\). Experiments use Ouro-1.4B W4A4 (\(T=4\)) and Huginn-0125 W4A8 (\(T=32\)). Full-depth uniform selection allocates a finite calibration budget equally across candidates, while the other strategies allocate additional calibration windows according to their respective selection procedures. We measure the end-to-end calibration cost of each strategy, including base calibration, candidate construction or restoration, candidate screening, and final model construction.

Within each model, candidate representations and quantization settings are fixed. We repeat selection using \(N=10\) independently sampled calibration sets, with all strategies sharing the same ordered calibration data within each run. Each strategy is evaluated under multiple predefined calibration budgets, and the selected configurations are evaluated at deployment depth \(T\) on held-out WikiText-2 using token-normalized perplexity with 2048-token windows for convenience. Figure~6 presents the relationship between calibration cost and selection quality, where lower perplexity indicates better selected configurations. Figure \ref{fig:selection_cost} shows that \method{} achieves a favorable quality–cost trade-off across both models. Compared with full-depth exhaustive search, LOOPY reaches comparable selection quality with substantially lower calibration cost. On Ouro-1.4B W4A4 and Huginn-0125 W4A8, LOOPY reduces end-to-end calibration cost by 4.2× and 3.5×, respectively.

\begin{figure}[!t]
    \centering
    \includegraphics[width=\linewidth]{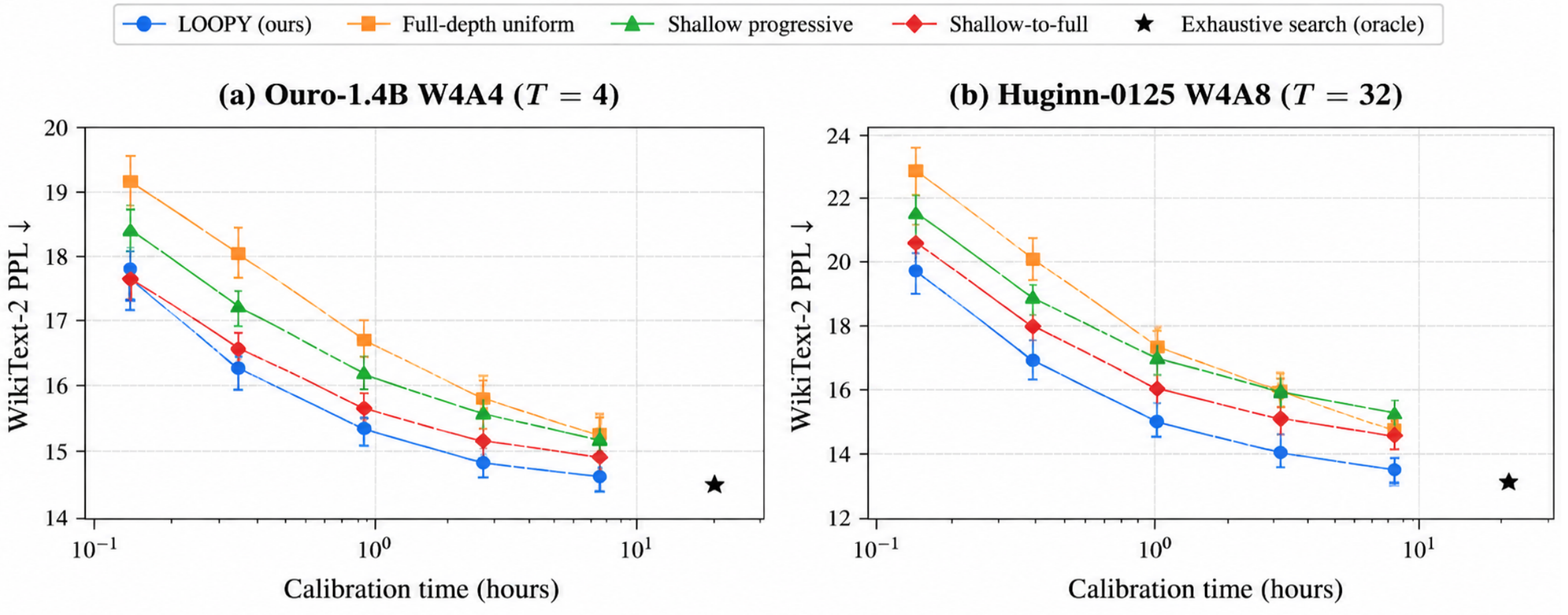}
    \caption{\textbf{Selection quality--calibration cost trade-off.} WikiText-2 perplexity versus calibration time for (a) Ouro-1.4B W4A4 ($T=4$) and (b) Huginn-0125 W4A8 ($T=32$). Black stars denote exhaustive-search references. Lower perplexity is better.}
    \label{fig:selection_cost}
\end{figure}

\subsection{KV Cache Quantization}\label{sec:kv-quantization}
We evaluate 4-bit and 8-bit KV quantization on Ouro-1.4B at $T=4$ with fixed Loopy W4A4 and W4A8 configurations, optionally applying post-RoPE Q/K Hadamard transforms following SpinQuant~\citep{liu2025spinquant}. We report WikiText-2 perplexity and average accuracy over four downstream tasks, with the full evaluation protocol and per-task results provided in Appendix~\ref{app:kv-results}.

As shown in Table~\ref{tab:kv-quantization}, Q/K Hadamard transforms reduce the relative perplexity increase from KV4 quantization from 6.56\% to 3.41\% under W4A4 and from 4.38\% to 2.19\% under W4A8. They also improve average accuracy by 1.40 percentage points over KV4 under W4A4.

\begin{table}[!t]
\centering\small
\caption{KV quantization results. $\Delta$PPL is relative to BF16 KV within each W/A setting; Avg. Acc. averages four tasks. $H$ denotes the Q/K Hadamard transform.}
\label{tab:kv-quantization}
\setlength{\tabcolsep}{9pt}
\begin{tabular}{llrrr}
\toprule
W/A & KV setting & PPL $\downarrow$ & $\Delta$PPL (\%) & Avg. Acc. (\%) $\uparrow$ \\
\midrule
W4A4 & BF16 KV & 11.4174 & --- & 54.5266 \\
& BF16 KV+$H$ & 11.4139 & $-0.03$ & 54.4818 \\
& KV8 & 11.4109 & $-0.06$ & 54.1423 \\
& KV4 & 12.1660 & $+6.56$ & 51.9853 \\
\rowcolor[gray]{0.93}
& KV4+$H$ & 11.8062 & $+3.41$ & 53.3883 \\
\midrule
W4A8 & BF16 KV & 9.3425 & --- & 61.7154 \\
& BF16 KV+$H$ & 9.3422 & $-0.004$ & 61.7387 \\
& KV8 & 9.3457 & $+0.03$ & 61.9927 \\
& KV4 & 9.7513 & $+4.38$ & 61.4155 \\
\rowcolor[gray]{0.93}
& KV4+$H$ & 9.5470 & $+2.19$ & 61.4134 \\
\bottomrule
\end{tabular}
\end{table}

\subsection{Deployment Efficiency and Validation}\label{sec:deployment}
\textbf{Decoding throughput.}
We evaluate Ouro-1.4B at a recurrent depth of $T=4$ on an NVIDIA H20 GPU under W4A4 and W4A8 quantization. With batch size 4 and an input length of 2048 tokens, online Loopy achieves about 41 and 41 decoding tokens per second (tokens/s), respectively, compared with 35 and 35 tokens/s for the official SpinQuant implementation. Table~\ref{tab:h20-throughput} summarizes the results, while Appendix~\ref{app:h20-throughput} details the timing protocol and implementation differences.

\begin{table}[!t]
\centering
\small
\begin{threeparttable}
\caption{\textbf{Decoding throughput on NVIDIA H20 (tokens/s).}
B/input denotes batch size and input length. Protocol details and
additional configurations appear in Appendix~\ref{app:h20-throughput}.}
\label{tab:h20-throughput}

\setlength{\tabcolsep}{9pt}
\begin{tabular}{lrrrr}
\toprule
& \multicolumn{2}{c}{W4A4} & \multicolumn{2}{c}{W4A8} \\
\cmidrule(lr){2-3}\cmidrule(lr){4-5}
Method & B1/128 & B4/2048 & B1/128 & B4/2048 \\
\midrule
QuaRot & $9.08^{*}$ & $35.14^{*}$ & 8.80 & 34.75 \\
SpinQuant & $9.16^{*}$ & $35.85^{*}$ & 9.14 & 35.78 \\
\rowcolor[gray]{0.93}
Loopy (online) & 10.01 & 40.93 & 10.22 & 41.17 \\
\rowcolor[gray]{0.93}
Loopy (offline)\tnote{\textdagger} & \textbf{11.21} & \textbf{45.45} & \textbf{11.37} & \textbf{45.83} \\
\bottomrule
\end{tabular}

\begin{tablenotes}[flushleft]
\scriptsize
\item[$\dagger$] Fully absorbable transform configuration.
\item[$*$] Numerical-mismatch diagnostic runs.
\end{tablenotes}

\end{threeparttable}
\end{table}


\section{Related Work}
\label{sec:related-work}
\textbf{Looped language models.} Looped Transformers increase effective depth through repeated application of shared parameters, as explored by Universal Transformers and subsequent looped language models \citep{dehghani2019universal,saunshi2025reasoning,geiping2026scaling,zhu2025scaling,jeddi2026loopformer}. Parameter sharing reduces the number of distinct weights required to represent a deep computation, motivating studies of memory-efficient recurrent architectures \citep{vendrell2026memory}.

\textbf{Post-training quantization.} Existing methods reduce quantization error through weight reconstruction or channel scaling, as exemplified by GPTQ, AWQ, and SmoothQuant \citep{frantar2022gptq,lin2024awq,xiao2023smoothquant}. Complementing these approaches, rotation-based methods redistribute outliers, with QuaRot using structured orthogonal transformations and SpinQuant learning rotations through an end-to-end quantization objective \citep{ashkboos2024quarot,liu2025spinquant}.

\textbf{Recurrence-aware optimization.} Repeated execution of the recurrent core produces evolving hidden states, allowing quantization errors to propagate across recurrent steps. LoopQ addresses this problem through loop-aware transformations and trajectory-aware calibration \citep{fang2026loopq}.

\section{Conclusion}
\pagelog{Conclusion}\label{sec:conclusion}
We presented \method{}, a PTQ framework that targets looped language models. By selecting shared low-bit representations according to their final target-depth prediction loss, \method{} aligns quantization selection with recurrent deployment and makes this selection practical through progressive full-depth evaluation. Experiments show benefits varying across architectures.

\label{main-text-end}
\clearpage
\section*{AI Use Statement}

Generative AI tools were used to assist with manuscript organization,
language editing, literature checking, and the development and execution
of code for manuscript preparation, verification, and supplementary
experiments. The authors take full responsibility for the accuracy and
integrity of the paper, including its claims, experimental results,
references, and supplementary materials.
\subsection*{Reproducibility statement}
The supplementary source includes the numerical inputs, figure-generation code,
source hashes, and a claim-to-evidence ledger. Appendix~\ref{app:protocol}
describes the evaluation protocol, while Appendix~\ref{app:provenance} documents the supporting records for quantization evaluation, diagnostic analysis, and packed-execution validation.
\pagelog{References}
\bibliography{references}
\bibliographystyle{iclr2027_conference}
\appendix
\clearpage
\section*{Appendix}

\section{Full Results}\label{app:full-results}
Tables~\ref{tab:w4a8-full} and \ref{tab:w4a4-full} report the full results. The remaining metrics retain their original evaluation. Avg. is the mean of five task accuracies (\%). FP ref. provides full-precision context. Collapsed results are retained. W4A8 averages reproduce the supplied results.

\begin{table}[!htbp]
\caption{Complete W4A8 results. HS: HellaSwag; WG: WinoGrande; LMB: LAMBADA.
$\dagger$: adapted baselines.}
\label{tab:w4a8-full}\centering\fontsize{9}{11}\selectfont\setlength{\tabcolsep}{2pt}\setlength{\aboverulesep}{1pt}\setlength{\belowrulesep}{1pt}
\begin{tabularx}{\textwidth}{@{}llrr*{6}{>{\raggedleft\arraybackslash}X}@{}}\toprule
Model & Method & WT2 PPL & LMB PPL & HS & WG & LMB Acc. & ARC-C & MMLU & Avg.\\\midrule
\multirow{7}{*}{\shortstack[l]{Ouro-\\2.6B}} & FP ref. & 10.51 & 4.23 & 76.45 & 69.53 & 70.08 & 58.36 & 73.83 & 69.65 \\ 
 & AWQ & 11.4603 & 5.2171 & 74.71 & 67.80 & 64.22 & 55.89 & 69.63 & 66.45 \\
 & SmoothQuant$^{\dagger}$ & 19.4442 & 15.2011 & 68.29 & 61.96 & 45.37 & 50.26 & 54.85 & 56.15 \\
 & QuaRot$^{\dagger}$ & 11.7643 & 5.3048 & 74.13 & 68.43 & 64.70 & 56.83 & 69.69 & 66.76 \\
 & SpinQuant$^{\dagger}$ & 12.0654 & 5.5639 & 74.28 & 66.93 & 63.13 & 57.76 & 68.93 & 66.21 \\
 & FlatQuant$^{\dagger}$ & 14.5179 & 8.1443 & 71.83 & 66.69 & 56.12 & 54.10 & 63.35 & 62.42 \\
\rowcolor[gray]{0.93}
 & Loopy & 10.97 & 4.46 & 75.94 & 70.48 & 68.76 & 56.14 & 71.86 & 68.64 \\
\midrule
\multirow{7}{*}{\shortstack[l]{Ouro-\\1.4B}} & FP ref. & 12.03 & 5.29 & 71.69 & 66.06 & 65.05 & 52.30 & 68.20 & 64.66 \\ 
 & AWQ & 13.6390 & 7.6843 & 68.43 & 63.85 & 56.72 & 51.71 & 61.85 & 60.51 \\
 & SmoothQuant$^{\dagger}$ & 53.2679 & 134.5964 & 52.44 & 56.91 & 19.44 & 34.73 & 25.40 & 37.78 \\
 & QuaRot$^{\dagger}$ & 14.2539 & 8.2905 & 68.71 & 64.01 & 55.85 & 48.46 & 58.39 & 59.09 \\
 & SpinQuant$^{\dagger}$ & 14.1781 & 7.5221 & 69.31 & 63.77 & 58.35 & 50.00 & 60.66 & 60.42 \\
 & FlatQuant$^{\dagger}$ & 26.5550 & 23.5027 & 62.40 & 59.04 & 38.85 & 45.56 & 25.22 & 46.21 \\
\rowcolor[gray]{0.93}
 & Loopy & 12.77 & 6.00 & 70.38 & 64.33 & 62.12 & 52.05 & 66.13 & 63.00 \\
\midrule
\multirow{7}{*}{\shortstack[l]{Huginn-\\0125}} & FP ref. & 13.21 & 7.30 & 66.68 & 61.64 & 55.35 & 37.54 & 30.90 & 50.42 \\ 
 & AWQ & 14.1421 & 8.8549 & 63.78 & 59.98 & 51.56 & 35.67 & 25.84 & 47.37 \\
 & SmoothQuant$^{\dagger}$ & $5.32\!\times\!10^{4}$ & $7.76\!\times\!10^{6}$ & 26.59 & 51.30 & 0.00 & 26.28 & 24.95 & 25.82 \\
 & QuaRot$^{\dagger}$ & $6.41\!\times\!10^{3}$ & $5.81\!\times\!10^{5}$ & 26.90 & 52.33 & 0.02 & 25.85 & 22.97 & 25.61 \\
 & SpinQuant$^{\dagger}$ & $6.65\!\times\!10^{3}$ & $5.03\!\times\!10^{5}$ & 26.56 & 49.33 & 0.04 & 23.98 & 23.83 & 24.75 \\
 & FlatQuant$^{\dagger}$ & $2.70\!\times\!10^{4}$ & $3.76\!\times\!10^{6}$ & 25.97 & 48.15 & 0.00 & 25.17 & 22.86 & 24.43 \\
\rowcolor[gray]{0.93}
 & Loopy & 13.68 & 7.98 & 64.71 & 61.17 & 53.81 & 36.69 & 29.67 & 49.21 \\
\midrule
\multirow{7}{*}{\shortstack[l]{Recurrent-\\Llama-\\3.2-1B}} & FP ref. & 20.72 & 17.23 & 47.51 & 52.49 & 40.81 & 29.35 & 23.00 & 38.63 \\ 
 & AWQ & 28.1567 & 50.7029 & 44.58 & 52.41 & 27.17 & 28.24 & 23.11 & 35.10 \\
 & SmoothQuant$^{\dagger}$ & 243.7745 & $1.53\!\times\!10^{4}$ & 32.75 & 49.09 & 3.30 & 23.63 & 23.22 & 26.40 \\
 & QuaRot$^{\dagger}$ & 26.0300 & 31.1539 & 43.77 & 53.83 & 32.06 & 28.41 & 24.95 & 36.60 \\
 & SpinQuant$^{\dagger}$ & 25.9630 & 28.4209 & 45.00 & 52.88 & 34.06 & 28.58 & 23.20 & 36.75 \\
 & FlatQuant$^{\dagger}$ & 86.7286 & $1.40\!\times\!10^{3}$ & 36.31 & 49.33 & 6.71 & 25.68 & 23.08 & 28.22 \\
\rowcolor[gray]{0.93}
 & Loopy & 24.76 & 26.97 & 45.88 & 52.80 & 34.76 & 28.75 & 23.94 & 37.23 \\
\bottomrule

\end{tabularx}
\end{table}
\clearpage
\begin{table}[!htbp]
\caption{Complete W4A4 results. Avg. is the arithmetic mean of the five displayed task accuracies (\%). Large perplexities are shown in scientific notation.}
\label{tab:w4a4-full}\centering\fontsize{9}{11}\selectfont\setlength{\tabcolsep}{2pt}\setlength{\aboverulesep}{1pt}\setlength{\belowrulesep}{1pt}
\begin{tabularx}{\textwidth}{@{}llrr*{6}{>{\raggedleft\arraybackslash}X}@{}}\toprule
Model & Method & WT2 PPL & LMB PPL & HS & WG & LMB Acc. & ARC-C & MMLU & Avg.\\\midrule
\multirow{7}{*}{\shortstack[l]{Ouro-\\2.6B}} & FP ref. & 10.51 & 4.23 & 76.45 & 69.53 & 70.08 & 58.36 & 73.83 & 69.65 \\ 
 & AWQ & 340.35 & $1.35\!\times\!10^{4}$ & 30.5 & 49.7 & 1.9 & 24.5 & 23.4 & 26.00 \\
 & SmoothQuant & $1.01\!\times\!10^{4}$ & $1.25\!\times\!10^{6}$ & 26.3 & 48.4 & 0.0 & 24.7 & 23.2 & 24.52 \\
 & QuaRot & 15.27 & 9.73 & 69.1 & 60.9 & 52.1 & 49.2 & 53.4 & 56.94 \\
 & SpinQuant & 16.62 & 8.18 & 70.4 & 63.1 & 56.2 & 53.5 & 61.2 & 60.88 \\
 & FlatQuant & $1.57\!\times\!10^{4}$ & $3.77\!\times\!10^{6}$ & 26.2 & 47.9 & 0.0 & 24.9 & 23.8 & 24.56 \\
\rowcolor[gray]{0.93}
 & Loopy & 12.35 & 6.20 & 72.87 & 65.59 & 62.35 & 55.38 & 65.64 & 64.37 \\
\midrule
\multirow{7}{*}{\shortstack[l]{Ouro-\\1.4B}} & FP ref. & 12.03 & 5.29 & 71.69 & 66.06 & 65.05 & 52.30 & 68.20 & 64.66 \\ 
 & AWQ & 645.60 & $7.02\!\times\!10^{4}$ & 29.7 & 50.6 & 0.4 & 23.4 & 23.7 & 25.56 \\
 & SmoothQuant & $2.22\!\times\!10^{4}$ & $9.83\!\times\!10^{6}$ & 26.7 & 49.1 & 0.0 & 25.4 & 24.8 & 25.20 \\
 & QuaRot & 19.16 & 22.18 & 62.0 & 55.5 & 39.0 & 40.7 & 26.2 & 44.68 \\
 & SpinQuant & 17.45 & 12.86 & 64.4 & 59.9 & 47.9 & 42.1 & 46.2 & 52.10 \\
 & FlatQuant & $4.69\!\times\!10^{4}$ & $8.32\!\times\!10^{6}$ & 26.2 & 49.0 & 0.0 & 24.1 & 25.8 & 25.02 \\
\rowcolor[gray]{0.93}
 & Loopy & 14.73 & 8.17 & 67.70 & 61.48 & 55.81 & 47.70 & 56.07 & 57.75 \\
\midrule
\multirow{7}{*}{\shortstack[l]{Huginn-\\0125}} & FP ref. & 13.21 & 7.30 & 66.68 & 61.64 & 55.35 & 37.54 & 30.90 & 50.42 \\ 
 & AWQ & 168.39 & 117.12 & 37.3 & 50.4 & 31.4 & 27.5 & 24.4 & 34.20 \\
 & SmoothQuant & $8.46\!\times\!10^{4}$ & $7.60\!\times\!10^{6}$ & 26.3 & 48.5 & 0.0 & 26.2 & 25.4 & 25.28 \\
 & QuaRot & $1.82\!\times\!10^{4}$ & $1.09\!\times\!10^{6}$ & 26.3 & 50.7 & 0.0 & 26.1 & 23.6 & 25.34 \\
 & SpinQuant & $1.62\!\times\!10^{4}$ & $1.14\!\times\!10^{6}$ & 26.5 & 50.0 & 0.0 & 24.1 & 24.3 & 24.98 \\
 & FlatQuant & $6.22\!\times\!10^{4}$ & $5.75\!\times\!10^{6}$ & 26.2 & 48.4 & 0.0 & 26.3 & 24.3 & 25.04 \\
\rowcolor[gray]{0.93}
 & Loopy & 28.09 & 36.36 & 47.18 & 51.46 & 33.61 & 29.61 & 24.76 & 37.32 \\
\midrule
\multirow{7}{*}{\shortstack[l]{Recurrent-\\Llama-\\3.2-1B}} & FP ref. & 20.72 & 17.23 & 47.51 & 52.49 & 40.81 & 29.35 & 23.00 & 38.63 \\ 
 & AWQ & $6.37\!\times\!10^{3}$ & $1.14\!\times\!10^{6}$ & 26.1 & 49.6 & 0.0 & 25.1 & 24.1 & 24.98 \\
 & SmoothQuant & $5.59\!\times\!10^{4}$ & $1.34\!\times\!10^{7}$ & 26.2 & 51.1 & 0.0 & 25.0 & 25.4 & 25.54 \\
 & QuaRot & 48.08 & 139.25 & 38.0 & 52.7 & 17.8 & 25.4 & 23.8 & 31.54 \\
 & SpinQuant & 41.03 & 102.34 & 39.7 & 50.5 & 21.5 & 27.2 & 23.6 & 32.50 \\
 & FlatQuant & $1.04\!\times\!10^{4}$ & $5.18\!\times\!10^{6}$ & 26.0 & 51.0 & 0.0 & 25.5 & 23.2 & 25.14 \\
\rowcolor[gray]{0.93}
 & Loopy & 42.08 & 110.83 & 37.68 & 52.17 & 20.64 & 26.70 & 23.07 & 32.0 \\
\bottomrule

\end{tabularx}\end{table}
\clearpage

\subsection{KV Quantization Protocol and Per-task Results}\label{app:kv-results}
We evaluate ByteDance/Ouro-1.4B at $T=4$, freezing the existing Loopy W4A4 and W4A8 weights and transforms without additional search. The controls are BF16 KV, BF16 KV+$H$, KV8, KV4, and KV4+$H$. Evaluation uses full-sequence teacher forcing with \texttt{use\_cache=False}. K and V are quantized and dequantized for attention computation; these experiments assess prediction quality rather than cached decoding performance.

Following SpinQuant~\citep{liu2025spinquant}, the $+H$ variants apply the same normalized $H_{128}$ to Q and K after RoPE and before the attention dot product. K is quantized after this transform; Q receives no additional quantization, and V receives no additional rotation. Otherwise, K is quantized directly after RoPE. V is quantized after its projection and head partitioning. Both K and V use dynamic asymmetric min--max quantization with a separate scale and zero-point for each token and head (128 dimensions), including zero in the range and applying no clipping. KV8 and KV4 quantize both tensors to 8 and 4 bits, respectively. Statistics and quantization arithmetic use FP32, and dequantized outputs are cast to BF16.

WikiText-2 raw test text is concatenated and partitioned into non-overlapping blocks of up to 2048 tokens, retaining the final shorter block. The first token of each block is excluded from loss. We aggregate NLL weighted by the number of valid prediction tokens and report $\mathrm{PPL}=\exp(\mathrm{NLL})$, over 149 blocks and 304,829 valid prediction tokens. This token-normalized protocol differs from the word-normalized 4096-token protocol of the main results. Average accuracy is the unweighted mean over ARC-Challenge, WinoGrande, LAMBADA, and HellaSwag, excluding MMLU.

Table~\ref{tab:kv-tasks} reports per-task accuracy. Under W4A4, Q/K Hadamard transforms improve all four task accuracies relative to KV4, whereas under W4A8, task-level gains and losses largely offset each other.
\begin{table}[t]
\centering\small
\caption{Per-task accuracy (\%) for KV quantization. $H$ denotes the Q/K Hadamard transform.}
\label{tab:kv-tasks}
\setlength{\tabcolsep}{7pt}
\begin{tabular}{llrrrr}
\toprule
W/A & KV setting & ARC-C & WinoGrande & LAMBADA & HellaSwag \\
\midrule
W4A4 & BF16 KV & 44.20 & 62.51 & 45.97 & 65.43 \\
& BF16 KV+$H$ & 44.97 & 60.85 & 46.59 & 65.51 \\
& KV8 & 45.65 & 58.96 & 46.48 & 65.48 \\
& KV4 & 40.78 & 58.64 & 44.34 & 64.17 \\
\rowcolor[gray]{0.93}
& KV4+$H$ & 40.96 & 61.88 & 45.68 & 65.04 \\
\midrule
W4A8 & BF16 KV & 51.02 & 64.56 & 61.48 & 69.80 \\
& BF16 KV+$H$ & 50.85 & 64.88 & 61.56 & 69.67 \\
& KV8 & 50.77 & 65.75 & 61.77 & 69.69 \\
& KV4 & 51.54 & 65.11 & 59.71 & 69.30 \\
\rowcolor[gray]{0.93}
& KV4+$H$ & 50.51 & 64.56 & 61.05 & 69.53 \\
\bottomrule
\end{tabular}
\end{table}

\subsection{Lower-Bit Quantization on Ouro Models}\label{app:lower-bit} Under W3A4, Loopy achieves WikiText-2 perplexities of 23.77 and 17.65 on Ouro-1.4B and Ouro-2.6B, respectively. W3A3 substantially increases perplexity to 1653.88 and 361.85, revealing severe degradation at this more aggressive precision.\par \begin{table}[H] \centering\small \caption{\textbf{WikiText-2 perplexity under lower-bit quantization.} All quantized results use Loopy. Lower is better.} \label{tab:ouro-lower-bit} \begin{tabular}{lrrrr} \toprule Model & BF16 & W4A4 & W3A4 & W3A3 \\ \midrule Ouro-1.4B & 12.03 & 14.73 & 23.77 & 1653.88 \\ Ouro-2.6B & 10.51 & 12.35 & 17.65 & 361.85 \\ \bottomrule \end{tabular} \end{table} \section{Evaluation Protocol}\label{app:protocol}
\textbf{Model identity and scope.}
We evaluate four checkpoints at depths 4, 4, 32, and 16: Ouro-2.6B, Ouro-1.4B, Huginn-0125, and Recurrent-Llama-3.2-1B. The Ouro sizes share a family. Checkpoint names denote physical parameter count, not unrolled depth. The paper's adapters quantize eligible dense Transformer linears across the prelude, recurrent core, and coda, while Loopy transforms only recurrent-core linears, retaining the AWQ anchor elsewhere. Embeddings, normalization, LM heads, and early-exit gates remain unquantized.

\textbf{Quantization recipes.}
LOOPY and AWQ both use symmetric group-wise W4 quantization with a weight group size of 32, together with dynamic symmetric per-token A8 or A4 quantization. In the W4A4 comparison in Table~\ref{tab:main-results}, our SmoothQuant adaptation uses group-wise W4 quantization with a weight group size of 32 and dynamic A4 quantization, and our QuaRot adaptation~\citep{ashkboos2024quarot} combines graph rotations with GPTQ~\citep{frantar2022gptq}, our SpinQuant adaptation~\citep{liu2025spinquant} learns orthogonal transforms before GPTQ and uses symmetric A4. Our FlatQuant adaptation~\citep{sun2024flatquant} uses layerwise affine/diagonal transformations and clipping with W4 RTN. These recurrent-architecture adaptations use the same weight group size of 32. Activation symmetry and calibration compute are not matched across methods. Collapsed adaptations do not establish a failure of every implementation of the original method.

\textbf{LoopQ comparison.}
Table~\ref{tab:loopq-comparison} reuses the W4A4 Loopy results from Table~\ref{tab:main-results} for Ouro-1.4B and Ouro-2.6B at $T=4$. LoopQ perplexities and task accuracies are taken from Table~1 of \citet{fang2026loopq}; its average accuracy is computed as the unweighted mean of HellaSwag, WinoGrande, LAMBADA, ARC-Challenge, and MMLU.   Quantization settings are aligned, while each method retains its own calibration procedure.

\textbf{Likelihood and downstream tasks.}
The full test splits are evaluated without a sample cap, with batch size one.
The main WikiText-2 results report word-normalized perplexity with a context length of 4096 tokens, aligned with the LoopQ evaluation protocol. The calibration-depth ablation and numerical-validation experiments use token-normalized perplexity on non-overlapping 2048-token windows, excluding the first position of each window. LAMBADA perplexity and final-word accuracy follow
the evaluator's task definition. HellaSwag, WinoGrande, ARC-Challenge, and
MMLU use zero-shot answer scoring. MMLU covers all 57 subjects. The paper
runtime asserts the requested number of recurrent steps and sets
\texttt{use\_cache=False}. Huginn's request-dependent state initialization is
seeded deterministically so that resumed shards use the same state.

\subsection{Selection Quality and Calibration Cost}\label{app:selection_cost}The comparison uses Ouro-1.4B W4A4 at $T=4$ and Huginn-0125 W4A8 at $T=32$. Shallow uniform ($T/2$), full-depth uniform ($T$), shallow-to-full screening ($T/4\!\rightarrow\!T/2\!\rightarrow\!T$), and LOOPY share the candidate pool, calibrated scaling bases, and quantization recipe within each model.   Selection regret is evaluated on the full selection set relative to the same-pool exhaustive optimum at target depth. End-to-end calibration time includes base calibration, candidate construction or restoration, screening, and final model construction, and is normalized by exhaustive search. Figure~\ref{fig:selection_cost} presents the selection quality--cost comparison.\subsection{Calibration Depth at Fixed Deployment Depth}\label{app:depth_ablation}\label{sec:ablation}\label{sec:calibration-depth}The following ablations retain the original token-normalized WikiText-2 perplexity protocol with 2048-token windows.\par We vary calibration depth while fixing deployment depth for each model. Ouro-2.6B W4A8 and Ouro-1.4B W4A4 use $T_{\mathrm{cal}}\in\{1,2,4\}$ and $T_{\mathrm{deploy}}=4$; Huginn-0125 W4A8 uses $T_{\mathrm{cal}}\in\{8,16,32\}$ and $T_{\mathrm{deploy}}=32$. Within each configuration, the candidate pool and selection windows remain fixed. We reselect candidates by rollout NLL at each calibration depth, and evaluate all selected candidates at the fixed deployment depth.\begin{figure}[!htb]\centering\includegraphics[width=0.95\linewidth]{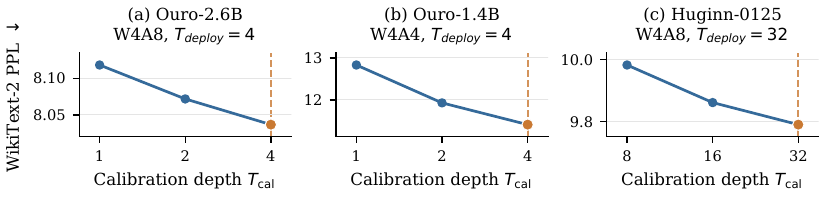}\caption{\textbf{Calibration depth at fixed deployment depth.} Token-normalized WikiText-2 perplexity with 2048-token windows for (a) Ouro-2.6B W4A8, (b) Ouro-1.4B W4A4, and (c) Huginn-0125 W4A8. Orange markers indicate matched calibration and deployment depths. Each configuration has one run; no error bars are shown. Vertical scales differ across panels.}\label{fig:calibration-depth}\end{figure}\par As shown in Figure~\ref{fig:calibration-depth}, increasing calibration depth lowers deployment perplexity in all three configurations. From the shallowest tested depth to the deployment depth, perplexity decreases from 8.1184 to 8.0367 on Ouro-2.6B, from 12.83 to 11.41 on Ouro-1.4B, and from 9.98 to 9.79 on Huginn. Recorded selection times increase from 96.7 to 296.8 seconds, 62.4 to 171.8 seconds, and 847.3 to 3164.7 seconds, respectively. Matching deployment depth therefore gives the best perplexity among the tested depths in these runs, at increased selection cost. Additional runs are needed to assess variability.

\section{Shared Transformations and Recurrent Error Propagation}\label{app:tangent} \subsection{Equivalence of Shared Transformations}\label{app:shared-transform-equivalence} \begin{proposition}[Equivalence under recurrent execution]\label{prop:shared-transform-equivalence} Consider a recurrent core $F$ with linear operators $xW_\ell^\top+b_\ell$. For each transformed operator $\ell$, let $D_\ell$ be a positive diagonal matrix and $R_\ell$ an orthogonal matrix. Replace its input and weight by \[ \widetilde x=xD_\ell^{-1}R_\ell, \qquad \widetilde W_\ell=W_\ell D_\ell R_\ell, \] while retaining its bias and all other operations. These transformations are fixed and shared across recurrent steps. Before quantization and in exact arithmetic, the resulting core $\widetilde F$ produces the same hidden-state trajectory as $F$ from the same initial state, \[ \widetilde h_t=h_t,\qquad t=0,\ldots,T, \] for any finite recurrent depth $T$. An unchanged output readout consequently produces identical final logits. \end{proposition}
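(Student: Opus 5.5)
The plan is a two-level argument: first a purely algebraic identity at the level of a single linear operator, then an induction over recurrent steps.

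\textbf{Local identity.} For a fixed operator $\ell$, I will show that the transformed operator agrees with the original on every input $x$:
\[
\widetilde x\,\widetilde W_\ell^\top+b_\ell
= xD_\ell^{-1}R_\ell R_\ell^\top D_\ell^\top W_\ell^\top+b_\ell
= xW_\ell^\top+b_\ell .
\]
This uses three facts: $(W_\ell D_\ell R_\ell)^\top=R_\ell^\top D_\ell W_\ell^\top$, orthogonality $R_\ell R_\ell^\top=I$, and $D_\ell^{-1}D_\ell=I$, which holds because $D_\ell$ is diagonal with positive, hence nonzero, entries. Consequently each transformed operator equals the original as a function, independent of the particular input value. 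This is exactly equation~\eqref{eq:shared-representation}. The bias is untouched, so no compensation term is needed.

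\textbf{Core identity.} Next I will argue that $\widetilde F(h)=F(h)$ for every hidden state $h$. The core is a composition of the linear operators with unchanged operations: normalization, attention softmax, nonlinearities, and residual additions. Because the transforms act locally inside each operator and do not alter the residual-stream coordinates, the two cores are built from pointwise-equal building blocks. Composition of equal functions therefore yields equal functions. Formally, I would induct over the computational graph of one core call in topological order; at each node, the inputs agree by the inductive hypothesis and the node function agrees, either trivially or by the local identity.

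\textbf{Recurrence and readout.} I will then induct on $t$. The base case is $\widetilde h_0=h_0$, since both trajectories start from the same initial state. For the step, $\widetilde h_{t+1}=\widetilde F(\widetilde h_t)=\widetilde F(h_t)=F(h_t)=h_{t+1}$. The transforms are fixed and shared, so the same function $\widetilde F$ is applied at every step, and any step-dependent inputs (such as Huginn's injected embedding) are identical in both models. This gives equality for all $t\le T$ and any finite $T$. An unchanged readout applied to $h_T$ then gives identical logits.

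\textbf{Main obstacle.} The only delicate point is the core-identity step: making precise that ``all other operations are retained'' covers everything outside the operators. In particular, this includes any step-dependent inputs and any fused, attention-internal use of the operator outputs. The key observation is that the proposition only replaces operator internals, never their outputs, so the graph-level induction handles all of these cases uniformly.
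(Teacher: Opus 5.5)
Your proposal is correct and follows the paper's proof essentially step for step. Both use the same operator-level identity from orthogonality of $R_\ell$ and invertibility of $D_\ell$, conclude $\widetilde F(h)=F(h)$ for every $h$, and induct on $t$ from $\widetilde h_0=h_0$. Your topological-order induction over the core's computational graph only makes explicit the step the paper states in one line (``since all other operations remain unchanged'').
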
 \noindent\textit{Proof.} For any input $x$, orthogonality of $R_\ell$ and invertibility of $D_\ell$ give \[ \begin{aligned} \widetilde x\widetilde W_\ell^\top+b_\ell &=xD_\ell^{-1}R_\ell (W_\ell D_\ell R_\ell)^\top+b_\ell\\ &=xD_\ell^{-1}R_\ell R_\ell^\top D_\ell W_\ell^\top+b_\ell\\ &=xW_\ell^\top+b_\ell. \end{aligned} \] Each transformed operator thus retains its original output coordinates and values. Since all other operations remain unchanged, $\widetilde F(h)=F(h)$ for every input $h$. Starting from $\widetilde h_0=h_0$, induction yields \[ \widetilde h_{t+1}=\widetilde F(\widetilde h_t)=F(h_t)=h_{t+1}. \] The final logits are identical under the same output readout.\hfill$\square$\par The equivalence is algebraic and requires neither contractivity nor differentiability, but does not imply bitwise agreement in finite precision. Quantization can produce representation-dependent perturbations, trajectories, and prediction losses, so pre-quantization equivalence neither guarantees equal quantized quality nor justifies selection by local reconstruction error alone. The next subsection analyzes recurrent perturbation propagation.\par
\subsection{Exact Loss Decomposition and Recurrent Transport} \label{app:recurrent-error} Fix a calibration window $x$ and its initialization, and suppress the dependence of the core maps on $x$. Let $h_{t+1}=F(h_t)$ and $\hat h_{t+1,c}=\widehat F_c(\hat h_{t,c})$, with $\hat h_{0,c}=h_0$. The candidate map $\widehat F_c$ is fixed and reused at every recurrent step. Let $\ell_x(h)$ denote the window-mean next-token NLL after a fixed output computation, so that $L_T(c,x)=\ell_x(\hat h_{T,c})$. Candidates and the reference share initialization and non-recurrent computation, which need not be fully unquantized. The reference therefore isolates recurrent-core changes and need not equal the fully floating-point baseline. All losses are assumed finite. \par \paragraph{Exact target-depth loss decomposition.} Define the local perturbation at the state actually visited by the quantized model and the loss after a reference continuation as \begin{equation}  z_{t,c}=F(\hat h_{t,c}),\qquad e_{t,c}=\widehat F_c(\hat h_{t,c})-F(\hat h_{t,c}),\qquad  V_t^{(T)}(h)=\ell_x\!\left(F^{T-t}(h)\right),  \label{eq:reference-continuation} \end{equation} where $0\leq t\leq T$ for $V_t^{(T)}$ and $F^0$ is the identity map. \begin{proposition}[Exact loss decomposition]\label{prop:exact-loss-decomposition} For any finite target depth $T$, \begin{equation}  L_T(c,x)-\ell_x(h_T)  =\sum_{t=0}^{T-1}\left[V_{t+1}^{(T)}(z_{t,c}+e_{t,c})-V_{t+1}^{(T)}(z_{t,c})\right].  \label{eq:exact-loss-decomposition} \end{equation} No differentiability, contractivity, or small-perturbation assumption is required. \end{proposition}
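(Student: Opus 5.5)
The plan is a telescoping argument along the quantized trajectory, using the reference continuation values $V_t^{(T)}$ as potentials. Set $a_t=V_t^{(T)}(\hat h_{t,c})$ for $t=0,\ldots,T$. At the endpoints, $a_T=V_T^{(T)}(\hat h_{T,c})=\ell_x(F^0(\hat h_{T,c}))=\ell_x(\hat h_{T,c})=L_T(c,x)$. Also $a_0=V_0^{(T)}(h_0)=\ell_x(F^T(h_0))=\ell_x(h_T)$, because $\hat h_{0,c}=h_0$ and the reference trajectory satisfies $h_T=F^T(h_0)$. Hence the left-hand side of \eqref{eq:exact-loss-decomposition} equals $a_T-a_0=\sum_{t=0}^{T-1}(a_{t+1}-a_t)$. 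This is a finite sum of finite quantities, since all losses are assumed finite.

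Next I identify each increment $a_{t+1}-a_t$ with the bracketed summand. For the first term, the definitions in \eqref{eq:reference-continuation} give $z_{t,c}+e_{t,c}=\widehat F_c(\hat h_{t,c})=\hat h_{t+1,c}$, so $a_{t+1}=V_{t+1}^{(T)}(z_{t,c}+e_{t,c})$. For the second term, use the semigroup identity $F^{T-t}=F^{T-t-1}\circ F$, valid for $0\le t\le T-1$ with $F^0$ the identity. It gives $V_t^{(T)}(h)=\ell_x(F^{T-t-1}(F(h)))=V_{t+1}^{(T)}(F(h))$. At $h=\hat h_{t,c}$ this yields $a_t=V_{t+1}^{(T)}(z_{t,c})$. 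Substituting both identities into the telescoped sum gives \eqref{eq:exact-loss-decomposition} term by term.

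Finally I check that no hidden assumptions enter. Only function composition, the recursion defining $\hat h_{t,c}$, the shared initialization, and cancellation of finite reals are used. No Taylor expansion is needed, so differentiability, contractivity and smallness of $e_{t,c}$ are irrelevant. The fact that $\widehat F_c$ is reused at every step is not even needed, although it holds.

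I do not expect a genuine obstacle. The only point needing care is the index bookkeeping at the ends: confirming $V_0^{(T)}(h_0)=\ell_x(h_T)$ and $V_T^{(T)}=\ell_x$. Another is ensuring that the reference $h_T$ in the statement is produced by the same map $F$ and the same fixed non-recurrent computation, inside $\ell_x$, as the candidate. The setup asserts both of these.
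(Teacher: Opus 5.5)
Your proposal is correct and is essentially the paper's proof: both use $V_t^{(T)}=V_{t+1}^{(T)}\circ F$ and $z_{t,c}+e_{t,c}=\hat h_{t+1,c}$ to rewrite each bracketed summand as $V_{t+1}^{(T)}(\hat h_{t+1,c})-V_t^{(T)}(\hat h_{t,c})$, and then telescope to $\ell_x(\hat h_{T,c})-\ell_x(F^T(h_0))$. Your side remark is also accurate: reusing $\widehat F_c$ at every step is not needed, since the telescoping works equally well with step-dependent quantized maps.
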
 \noindent\textit{Proof.} By definition, $V_t^{(T)}(h)=V_{t+1}^{(T)}(F(h))$ and $\hat h_{t+1,c}=z_{t,c}+e_{t,c}$. Each summand therefore equals $V_{t+1}^{(T)}(\hat h_{t+1,c})-V_t^{(T)}(\hat h_{t,c})$. Summing telescopes to \[  V_T^{(T)}(\hat h_{T,c})-V_0^{(T)}(h_0)  =\ell_x(\hat h_{T,c})-\ell_x(F^T(h_0)), \] which proves the identity.\hfill$\square$ \par The identity requires no quantizer derivatives and applies to discontinuous quantized maps. It evaluates perturbations along the quantized trajectory through reference continuations, rather than by their norms alone, and does not imply monotone error growth. \par \paragraph{Optional smooth interpretation.} For fixed $x,c,t,T$, additionally suppose $V_{t+1}^{(T)}$ is continuously differentiable on a neighborhood of the segment $\{z_{t,c}+s e_{t,c}:0\leq s\leq1\}$, with an $M_{t,c}^{(T)}$-Lipschitz gradient along that segment. The fundamental theorem of calculus gives \begin{equation} \begin{aligned}  &V_{t+1}^{(T)}(z_{t,c}+e_{t,c})-V_{t+1}^{(T)}(z_{t,c})  =\left\langle\nabla V_{t+1}^{(T)}(z_{t,c}),e_{t,c}\right\rangle+\rho_{t,c,x}^{(T)},\\  &\rho_{t,c,x}^{(T)}=\int_0^1\left\langle\nabla V_{t+1}^{(T)}(z_{t,c}+s e_{t,c})-\nabla V_{t+1}^{(T)}(z_{t,c}),e_{t,c}\right\rangle\,ds. \end{aligned} \end{equation} The Lipschitz condition implies $|\rho_{t,c,x}^{(T)}|\leq M_{t,c}^{(T)}\|e_{t,c}\|_2^2/2$. Consequently, \begin{equation} \begin{aligned}  L_T(c,x)-\ell_x(h_T)  &=\sum_{t=0}^{T-1}\left\langle\nabla V_{t+1}^{(T)}(z_{t,c}),e_{t,c}\right\rangle+R_{T,c,x},\\  |R_{T,c,x}|&\leq\frac12\sum_{t=0}^{T-1}M_{t,c}^{(T)}\|e_{t,c}\|_2^2. \end{aligned} \label{eq:smooth-loss-decomposition} \end{equation} When the chain rule applies, \[  \nabla V_{t+1}^{(T)}(z)  =\left[D(F^{T-t-1})(z)\right]^\top\nabla\ell_x\!\left(F^{T-t-1}(z)\right). \] The first-order effect depends on perturbation direction and reference continuation, with a remainder not assumed small at aggressive precision. A quantized output computation may violate the smoothness assumptions, whereas Equation~(\ref{eq:exact-loss-decomposition}) remains valid without them. \par \paragraph{Relation to local state-error transport.} Writing $\delta_{t,c}=\hat h_{t,c}-h_t$, a differentiable reference map admits the local expansion \begin{equation}  \delta_{t+1,c}=J_t\delta_{t,c}+e_{t,c}+r_{t,c},\qquad  J_t=\left.\frac{\partial F}{\partial h}\right|_{h_t},  \label{eq:transport} \end{equation} Here, $r_{t,c}$ is the Taylor remainder. Without controlling it, this recurrence alone gives no quantitative error guarantee for the quantized model. \par \paragraph{Connection to depth mismatch.} For the fixed calibration set $\mathcal D$, define \[  \mathcal L_t^{\mathrm{ref}}=\frac1{|\mathcal D|}\sum_{x\in\mathcal D}\ell_x(h_t(x)),\qquad  \Delta_t(c)=\mathcal L_t(c;\mathcal D)-\mathcal L_t^{\mathrm{ref}}. \] Averaging Equation~(\ref{eq:exact-loss-decomposition}) over windows decomposes $\Delta_t(c)$ exactly. If $d_{\tau,T}(c)=\mathcal L_\tau(c;\mathcal D)-\mathcal L_T(c;\mathcal D)$, then \[  d_{\tau,T}(c)=\mathcal L_\tau^{\mathrm{ref}}-\mathcal L_T^{\mathrm{ref}}+\Delta_\tau(c)-\Delta_T(c). \] The reference-only term is independent of $c$. Hence the depth mismatch in Equation~(\ref{eq:depth-mismatch}) satisfies \[  \beta_{\tau,T}=\max_{c\in\mathcal C}\{\Delta_\tau(c)-\Delta_T(c)\}  -\min_{c\in\mathcal C}\{\Delta_\tau(c)-\Delta_T(c)\}. \] This expresses depth mismatch through depth-dependent perturbation effects, without guaranteeing small $\beta_{\tau,T}$ or a ranking reversal when $\beta_{\tau,T}>0$. \par \subsection{A Contractive Ranking-Reversal Construction}\label{app:contractive-counterexample} We give an explicit construction showing that contractivity and smaller local perturbation norms do not guarantee agreement between shallow- and target-depth candidate rankings. \paragraph{Shared contractive dynamics.} Let $h_0=0$, $F(h)=Ah$, and \[  A=\operatorname{diag}(0.9,0.1),\qquad b_a=(0.5,0)^\top,\qquad b_b=(0,1)^\top,  \qquad \widehat F_c(h)=Ah+\eta b_c,\quad \eta>0. \] Each candidate $c\in\{a,b\}$ reuses the same map at every recurrent step. Since the additive perturbation cancels between inputs and $\|A\|_2=0.9$, \[ \|\widehat F_c(h)-\widehat F_c(h')\|_2 =\|F(h)-F(h')\|_2 =\|A(h-h')\|_2 \leq0.9\|h-h'\|_2. \] At every state, $\widehat F_c(h)-F(h)=\eta b_c$. Thus $\|e_{t,a}\|_2=0.5\eta<\eta=\|e_{t,b}\|_2$, both on and off the realized trajectories. \paragraph{Explicit trajectories and ranking reversal.} The reference trajectory is zero. Iterating the candidate recurrence gives \[  \hat h_{T,c}=\eta\sum_{j=0}^{T-1}A^j b_c. \] For binary logits $(0,u(h))$, where $u(h)=h_1+h_2$, and the first class as target, the NLL is $\ell(h)=\log(1+\exp(u(h)))$. It is strictly increasing in $u(h)$, and \begin{equation}  u(\hat h_{T,a})=\frac{\eta}{2}\sum_{j=0}^{T-1}0.9^j,\qquad  u(\hat h_{T,b})=\eta\sum_{j=0}^{T-1}0.1^j.  \label{eq:contractive-trajectories} \end{equation} In particular, \[  \begin{array}{c|cc|c}  \text{Depth} & u(\hat h_{T,a})/\eta & u(\hat h_{T,b})/\eta & \text{Lower NLL}\\  \hline  1 & 0.5 & 1 & a\\  4 & 1.7195 & 1.111 & b  \end{array} \] Writing $L_t(c)=\ell(\hat h_{t,c})$, we obtain $L_1(a)<L_1(b)$ but $L_4(a)>L_4(b)$. Both inequalities are strict for every $\eta>0$, including arbitrarily small perturbations. Candidate $a$ perturbs the more slowly decaying coordinate, so its smaller local errors can have a larger effect after repeated execution. \paragraph{Selection failure without window-estimation error.} For $\mathcal C=\{a,b\}$, a singleton calibration set, and target depth $T=4$, exact depth-1 screening that retains one candidate selects $a$ and eliminates the target-depth optimum $b$, despite zero window-estimation error. Exact target-depth evaluation of the survivor cannot recover $b$, yielding \begin{equation}  \operatorname{Reg}_4(a)=\log(1+\exp(1.7195\eta))-\log(1+\exp(1.111\eta))>0.  \label{eq:contractive-selection-regret} \end{equation} For $\eta=1$, the target-depth NLLs are approximately $1.884299$ and $1.395600$, giving regret $0.488699$. These are values of the analytical construction, not language-model measurements. \paragraph{Scope.} This abstract construction rules out guarantees based solely on contractivity and local-error norms. It does not specify quantization settings realizing these affine maps or assert contractivity of the evaluated language-model cores. Nor does it imply that shallow screening always fails or that full-depth screening is uniformly superior at equal compute. \par  \subsection{Proof of the Selection-Regret Bound}\label{app:selection-regret} To prove Equation~(\ref{eq:selection-regret-bound}), fix a finite nonempty candidate pool $\mathcal C$ and calibration set $\mathcal D$, with fixed representations, calibration bases, and window initializations. Write $\mathcal L_t(c)=\mathcal L_t(c;\mathcal D)$ and assume finite losses. \paragraph{Depth mismatch.} Set $d_{\tau,T}(c)=\mathcal L_\tau(c)-\mathcal L_T(c)$. The pairwise mismatch in Equation~(\ref{eq:depth-mismatch}) satisfies \begin{equation}  \beta_{\tau,T}=\max_{c,c'\in\mathcal C}|d_{\tau,T}(c)-d_{\tau,T}(c')|  =\max_{c\in\mathcal C}d_{\tau,T}(c)-\min_{c\in\mathcal C}d_{\tau,T}(c).  \label{eq:depth-mismatch-range} \end{equation} The maximum pairwise absolute difference equals the range, so candidate-independent loss offsets leave $\beta_{\tau,T}$ unchanged, and $\beta_{T,T}=0$. \paragraph{Screening protocol and assumptions.} Start with $\mathcal C_1=\mathcal C$. At each stage $1\leq r<R$, score the surviving candidates at depth $\tau_r\in\{1,\ldots,T\}$ on a nonempty subset $\mathcal D_r\subseteq\mathcal D$: \[  \widehat L_r(c)=\frac1{|\mathcal D_r|}\sum_{x\in\mathcal D_r}L_{\tau_r}(c,x),\qquad c\in\mathcal C_r. \] Retain the $K_{r+1}$ lowest-scoring candidates, where $1\leq K_{r+1}\leq|\mathcal C_r|$, with fixed tie breaking. Candidate sets are nested and eliminated candidates are never reintroduced. Assume that each screening stage satisfies \begin{equation}  \max_{c\in\mathcal C_r}|\widehat L_r(c)-\mathcal L_{\tau_r}(c)|\leq\varepsilon_r.  \label{eq:stage-window-error} \end{equation} The final stage evaluates all survivors on the complete calibration set at the target depth and returns $\hat c_T\in\arg\min_{c\in\mathcal C_R}\mathcal L_T(c)$. For $R\geq2$, we will show \begin{equation}  0\leq\operatorname{Reg}_T(\hat c_T)  :=\mathcal L_T(\hat c_T)-\min_{c\in\mathcal C}\mathcal L_T(c)  \leq\max_{1\leq r<R}(\beta_{\tau_r,T}+2\varepsilon_r).  \label{eq:appendix-selection-regret} \end{equation} \paragraph{Proof.} Let $L_T^\star=\min_{c\in\mathcal C}\mathcal L_T(c)$ and $\mathcal C_T^\star=\{c\in\mathcal C:\mathcal L_T(c)=L_T^\star\}$. If $\mathcal C_R\cap\mathcal C_T^\star\neq\varnothing$, exact final selection returns a global minimizer and the regret is zero. No uniqueness assumption is needed. \par Otherwise, define the stage at which the last global minimizers are discarded: \[  r^\dagger=\min\{r:\mathcal C_{r+1}\cap\mathcal C_T^\star=\varnothing\}. \] Since $\mathcal C_1=\mathcal C$ and the sets are nested, $1\leq r^\dagger<R$, and some $c^\star\in\mathcal C_{r^\dagger}\cap\mathcal C_T^\star$ exists. Every retained candidate $d\in\mathcal C_{r^\dagger+1}$ satisfies $\widehat L_{r^\dagger}(d)\leq\widehat L_{r^\dagger}(c^\star)$, including under tied scores. Applying Equation~(\ref{eq:stage-window-error}) to both candidates yields \[  \mathcal L_{\tau_{r^\dagger}}(d)-\mathcal L_{\tau_{r^\dagger}}(c^\star)\leq2\varepsilon_{r^\dagger}. \] By the definition of pairwise depth mismatch, \[  \begin{aligned} \mathcal L_T(d)-L_T^\star &\leq\mathcal L_{\tau_{r^\dagger}}(d)-\mathcal L_{\tau_{r^\dagger}}(c^\star)+\beta_{\tau_{r^\dagger},T}\\ &\leq2\varepsilon_{r^\dagger}+\beta_{\tau_{r^\dagger},T}. \end{aligned} \] This holds for every candidate retained at that stage. All later candidate sets are subsets of $\mathcal C_{r^\dagger+1}$, so it also holds for $\hat c_T$. Thus \begin{equation}  \operatorname{Reg}_T(\hat c_T)\leq\beta_{\tau_{r^\dagger},T}+2\varepsilon_{r^\dagger}.  \label{eq:last-optimum-regret} \end{equation} Taking the maximum over screening stages proves the upper bound. Nonnegativity follows from $\hat c_T\in\mathcal C$.\hfill$\square$ \paragraph{Full-depth consequence.} For \method{}, $\tau_r=T$ at every screening stage, so $\beta_{\tau_r,T}=0$ and \[  \operatorname{Reg}_T(\hat c_T)\leq2\max_{1\leq r<R}\varepsilon_r, \] recovering Equation~(\ref{eq:full-depth-regret}). If all global minimizers are eliminated, the refined bound is $2\varepsilon_{r^\dagger}$. The stage-maximum bound also implies the weaker sum bound because all terms are nonnegative. Accumulation across stages is unnecessary for nested top-$K$ screening: later elimination cannot escape the loss bound already satisfied by every retained candidate. If $R=1$, there is no screening and exact full-set selection has zero regret. \paragraph{Assumptions and limits.} This deterministic guarantee is conditional on the fixed-pool protocol and stagewise error bounds above. It requires neither a unique optimum nor a positive gap, and assumes no window independence, unbiasedness, bounded NLL, or distributional model for screening errors. The schedule does not certify $\varepsilon_r$, and the proof supplies no concentration or sample-complexity guarantee. The result concerns the fixed pool and calibration set, not all quantizers or held-out generalization. Nonzero depth mismatch neither lower-bounds regret nor necessarily reverses rankings, and removing it does not guarantee superiority at equal compute. The next proposition gives sufficient conditions for exact optimum recovery. \par  \subsection{Optimal-Candidate Preservation}\label{app:optimal-candidate-preservation} Complementing the regret bound in Appendix~\ref{app:selection-regret}, we give a sufficient condition for exact recovery of a unique optimum under full-depth screening. Retain its fixed candidate pool, calibration set, and representations, and write $L(c)=\mathcal L_T(c;\mathcal D)$. \begin{proposition}[Optimal-candidate preservation]\label{prop:optimal-candidate-preservation} Assume that $L$ has a unique minimizer $c^\star$ over the finite nonempty pool $\mathcal C$. Start with $\mathcal C_1=\mathcal C$. At each screening stage $1\leq r<R$, score candidates at the target depth $T$ using estimates $\widehat L_r(c)$ and retain the $K_{r+1}$ lowest-scoring candidates, with fixed tie breaking and $1\leq K_{r+1}\leq|\mathcal C_r|$. Eliminated candidates are not reintroduced. Suppose \begin{equation}  \max_{c\in\mathcal C_r}|\widehat L_r(c)-L(c)|\leq\varepsilon_r.  \label{eq:preservation-estimation-error} \end{equation} Let $L_{r,(j)}$ be the $j$-th smallest full-set loss among candidates in $\mathcal C_r$, counted with multiplicity. At every stage that eliminates candidates, assume \begin{equation}  L_{r,(K_{r+1}+1)}-L(c^\star)>2\varepsilon_r.  \label{eq:preservation-gap} \end{equation} Then $c^\star$ survives all screening stages. Exact final evaluation of the survivors on $\mathcal D$ at depth $T$ returns $c^\star$, and the selection regret is zero. \end{proposition}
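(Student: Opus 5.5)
The plan is an induction over screening stages showing that $c^\star\in\mathcal C_r$ for every $r=1,\ldots,R$. The base case is immediate because $\mathcal C_1=\mathcal C$. For the inductive step, assume $c^\star\in\mathcal C_r$ for some $r<R$. If stage $r$ eliminates nothing ($K_{r+1}=|\mathcal C_r|$), then $\mathcal C_{r+1}=\mathcal C_r$ and there is nothing to show. Otherwise the gap condition~(\ref{eq:preservation-gap}) is in force, and I will argue by contradiction: suppose $c^\star\notin\mathcal C_{r+1}$.

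The key step is a counting argument. If $c^\star$ is eliminated, then there are $K_{r+1}$ retained candidates $d\in\mathcal C_{r+1}$, all distinct from $c^\star$. Each satisfies $\widehat L_r(d)\leq\widehat L_r(c^\star)$, since the retained set consists of the lowest-scoring candidates and this holds even under tie breaking. By the stagewise error bound~(\ref{eq:preservation-estimation-error}), each such $d$ then satisfies
\[
L(d)\leq \widehat L_r(d)+\varepsilon_r\leq \widehat L_r(c^\star)+\varepsilon_r\leq L(c^\star)+2\varepsilon_r .
\]
Together with $c^\star$ itself, this gives at least $K_{r+1}+1$ distinct members of $\mathcal C_r$ whose full-set loss is at most $L(c^\star)+2\varepsilon_r$. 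Counting with multiplicity, the order statistic therefore satisfies $L_{r,(K_{r+1}+1)}\leq L(c^\star)+2\varepsilon_r$. This contradicts~(\ref{eq:preservation-gap}), so $c^\star\in\mathcal C_{r+1}$, and the induction closes.

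It follows that $c^\star\in\mathcal C_R$. Final selection minimizes $L$ exactly over $\mathcal C_R\subseteq\mathcal C$, and $c^\star$ is the unique minimizer over all of $\mathcal C$, so the argmin is exactly $c^\star$. The selection regret is therefore zero. As a consistency check, one could also obtain zero regret from Appendix~\ref{app:selection-regret} with $\beta_{T,T}=0$, since that argument yields zero regret whenever a global minimizer survives; the direct argument above suffices, however.

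The main obstacle is the bookkeeping in the counting step rather than any analytic estimate. Three points need care: ties in the estimated scores must not break the inequality $\widehat L_r(d)\leq\widehat L_r(c^\star)$; the order statistic must be counted with multiplicity, so that ties in the true losses are handled correctly; and the $K_{r+1}$ retained candidates must be shown distinct from $c^\star$, which holds precisely because we assumed $c^\star$ was eliminated. Once these are handled, the $2\varepsilon_r$ slack in the gap condition is exactly what the two applications of the error bound consume.
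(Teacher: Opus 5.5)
Your proof is correct and is essentially the paper's argument. Both run an induction over stages with the same counting step. You phrase that step by contradiction: eliminating $c^\star$ would yield $K_{r+1}+1$ candidates with loss at most $L(c^\star)+2\varepsilon_r$. The paper phrases it directly: every candidate scoring no worse than $c^\star$ has loss strictly below $L_{r,(K_{r+1}+1)}$, and at most $K_{r+1}$ candidates do, so ties and multiplicity are handled the same way and the final zero-regret step is identical.
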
 \noindent\textit{Proof.} Initially, $c^\star\in\mathcal C_1$. Suppose $c^\star\in\mathcal C_r$. If $K_{r+1}=|\mathcal C_r|$, no candidate is discarded and the claim is immediate. Otherwise, write $K=K_{r+1}$. For every candidate satisfying $L(c)\geq L_{r,(K+1)}$, Equations~(\ref{eq:preservation-estimation-error}) and~(\ref{eq:preservation-gap}) imply \[  \widehat L_r(c)\geq L(c)-\varepsilon_r  >L(c^\star)+\varepsilon_r\geq\widehat L_r(c^\star). \] Consequently, any candidate whose estimated loss is no greater than that of $c^\star$ must have full-set loss strictly below $L_{r,(K+1)}$. There are at most $K$ such candidates, including $c^\star$ itself. Thus at most $K-1$ other candidates can precede $c^\star$, even with tied estimated scores, and $c^\star$ is retained. Induction proves survival through every stage. Since the unique global minimizer belongs to $\mathcal C_R$, exact final evaluation returns it.\hfill$\square$ \paragraph{Interpretation and relation to the regret bound.} The gap compares the optimum with the first loss outside the retention budget, not necessarily the second-best candidate. For a fixed candidate set, retaining more candidates weakens the separation requirement. More windows may improve estimation but need not monotonically reduce realized error. The strict gap guarantees survival under any fixed tie-breaking rule. \par Unlike the regret bound in Appendix~\ref{app:selection-regret}, exact recovery here requires uniqueness and separation. The regret bound remains applicable without either condition, including after all optima are discarded. Failure of Equation~(\ref{eq:preservation-gap}) does not imply elimination of the optimum, since the condition is sufficient, not necessary. \paragraph{Scope.} The guarantee is conditional on the stated uniform estimation-error bounds and candidate separation, which the fixed schedule does not certify. It does not establish recovery under arbitrary shallow-depth scores, optimality beyond the fixed candidate pool, held-out generalization, or superiority at equal compute. Empirical oracle recovery and selection regret characterize complementary aspects of approximation quality but do not verify these stagewise conditions. \par \section{H20 Decoding Throughput}\label{app:h20-throughput}
\textbf{Evaluation protocol.}
We benchmark Ouro-1.4B at $T=4$ on a NVIDIA H20 GPU, using batch size/input length pairs of 1/128 and 4/2048. Each run generates 128 tokens without EOS stopping, with the first token produced during prefill and the remaining 127 through cached decoding. Aggregate batch throughput is $127B/t_{\mathrm{decode}}$, measured with CUDA-synchronized wall-clock timing over the complete decoding loop, including Python scheduling but excluding prefill and initialization. All methods use fresh BF16 KV caches with 96 recurrent slots, Flash SDPA, and eager execution without CUDA Graphs or profiling. We report the arithmetic mean of ten throughput measurements, collected in two blocks of three warm-up runs followed by five fixed-input measurements.

\textbf{Offline and hybrid configurations.}
Loopy (offline) combines Q/K/V- and gate/up-shared input scaling absorbed into RMSNorm and linear weights, head-wise V--O Hadamard rotation with block size 128 absorbed into paired weights, and up--down scaling that does not cross the SiLU gate. These transforms are shared across recurrent calls and require no explicit online transformation, while activation quantization remains dynamic. This constrained configuration differs from the online Loopy checkpoint. The two hybrid configurations add down-input or down- and Q/K/V-input rotations using seed 42 and block size 128 with matching weights. All three configurations reuse compatible input quantization results across Q/K/V and gate/up without fusing their matrix multiplications.

\textbf{Kernel differences and diagnostic entries.}
Loopy uses group-128 Triton packed-W4 kernels, whereas the adapted QuaRot and SpinQuant kernels use per-output-channel weight scales, with additional asymmetric activation zero-point correction for SpinQuant. Attention and timing scope are shared, so the results compare implementations rather than isolate transformation overhead. Entries marked $*$ denote W4A4 integer paths whose NLL differences from BF16 evaluation exceed the predefined tolerance; their throughput cannot be paired directly with BF16-path quality results. W4A8 passes this check on four calibration windows, but full quality evaluation was not repeated on the timed execution paths.

\begin{table}[t]
\centering\small
\caption{Complete H20 decoding throughput (tokens/s, $\uparrow$). The last two rows add online rotations to Loopy (offline).}
\label{tab:h20-full}
\setlength{\tabcolsep}{6pt}
\begin{tabular}{lrrrr}
\toprule
& \multicolumn{2}{c}{W4A4} & \multicolumn{2}{c}{W4A8} \\
\cmidrule(lr){2-3}\cmidrule(lr){4-5}
Method & B1/128 & B4/2048 & B1/128 & B4/2048 \\
\midrule
QuaRot & $9.08^{*}$ & $35.14^{*}$ & 8.80 & 34.75 \\
SpinQuant & $9.16^{*}$ & $35.85^{*}$ & 9.14 & 35.78 \\
\rowcolor[gray]{0.93}
Loopy (online) & 10.01 & 40.93 & 10.22 & 41.17 \\
\rowcolor[gray]{0.93}
Loopy (offline) & \textbf{11.21} & 45.45 & \textbf{11.37} & \textbf{45.83} \\
\rowcolor[gray]{0.93}
+ down rotation & 11.17 & \textbf{45.56} & \textbf{11.37} & 45.60 \\
\rowcolor[gray]{0.93}
+ down/QKV rotation & 11.15 & 45.26 & 11.25 & 45.66 \\
\bottomrule
\end{tabular}
\end{table}

\section{Evidence and Reproduction}\label{app:provenance}
The reported results reflect the current experimental pipeline. The main Loopy quantization results use configurations selected by progressive full-depth evaluation, with full-precision results provided as references. Supporting materials include evaluation reports, JSON records for diagnostic figures, and complete-test records for the rebuilt packed checkpoint. 

Figure inputs are stored as paper-local snapshots with source hashes. The accompanying records document the configurations, evaluation protocols, and execution paths associated with the reported results. Hashes identify data and checkpoint versions; numerical agreement and prediction quality are assessed through the corresponding experiments. The supplementary evidence ledger records the scope of these checks.

\section{Activation Distributions Across Transformer Blocks}\label{app:activation-distributions}
To examine whether the activation behavior illustrated in Figure~\ref{fig:rotation-benefit}(a) extends beyond the displayed block, we visualize the attention and FFN inputs of all 24 physical Transformer blocks in the recurrent core of Ouro-1.4B at recurrent step 4. All views use the same 2048-token training window, identity scaling, and block-Hadamard rotation with block size 128 and seed 42. Across all 48 block--input pairs, rotation reduces both the maximum absolute activation and the mean per-token maximum-to-RMS ratio. The figures retain all 2048 channels and display maximum absolute values within consecutive 16-token bins; vertical axes are scaled independently, while color ranges are shared within each before--after pair. Blocks are shown in order without omission, extending the block-22 example in the main text; these observations concern the fixed window and recurrent step evaluated here.

\begin{figure}[!htbp]
\centering
\includegraphics[width=\linewidth]{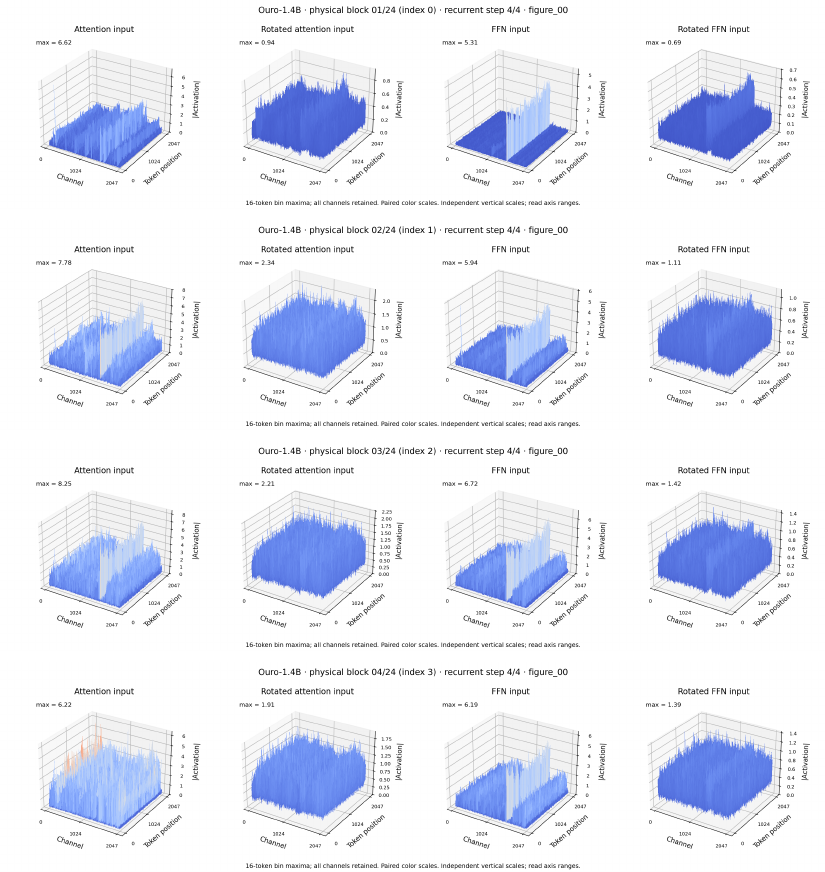}
\caption{\textbf{Activation distributions in blocks 1--4 of Ouro-1.4B at recurrent step 4.} Each row shows attention input, rotated attention input, FFN input, and rotated FFN input, from left to right.}
\label{fig:activation-blocks-01-04}
\end{figure}
\clearpage

\begin{figure}[!htbp]
\centering
\includegraphics[width=\linewidth]{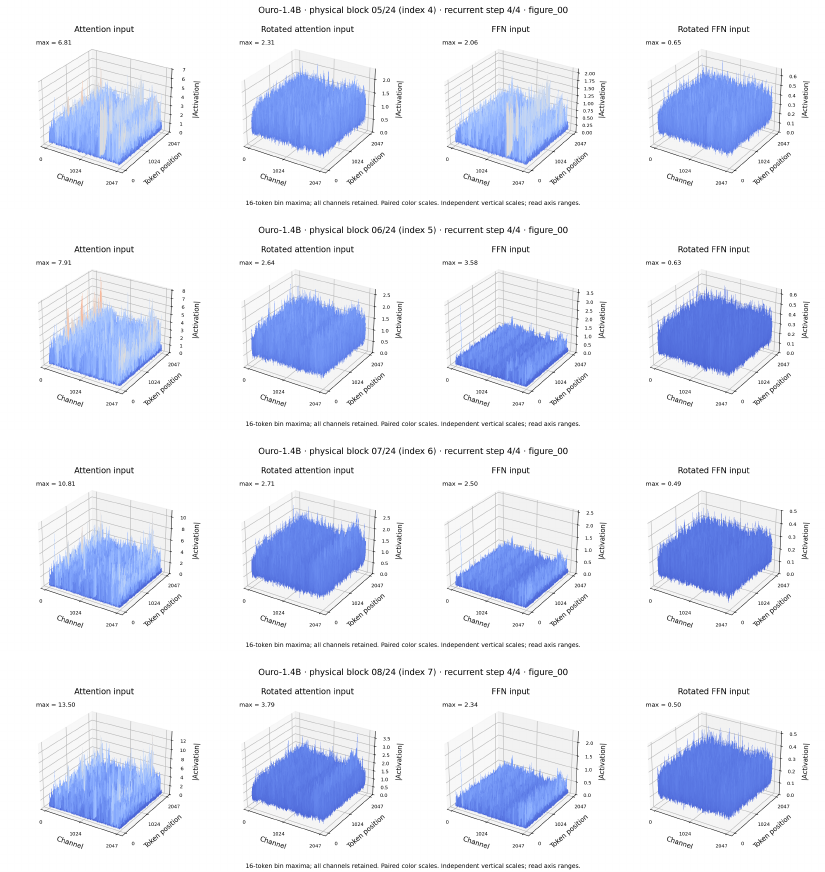}
\caption{\textbf{Activation distributions in blocks 5--8 of Ouro-1.4B at recurrent step 4.} Each row shows attention input, rotated attention input, FFN input, and rotated FFN input, from left to right.}
\label{fig:activation-blocks-05-08}
\end{figure}
\clearpage

\begin{figure}[!htbp]
\centering
\includegraphics[width=\linewidth]{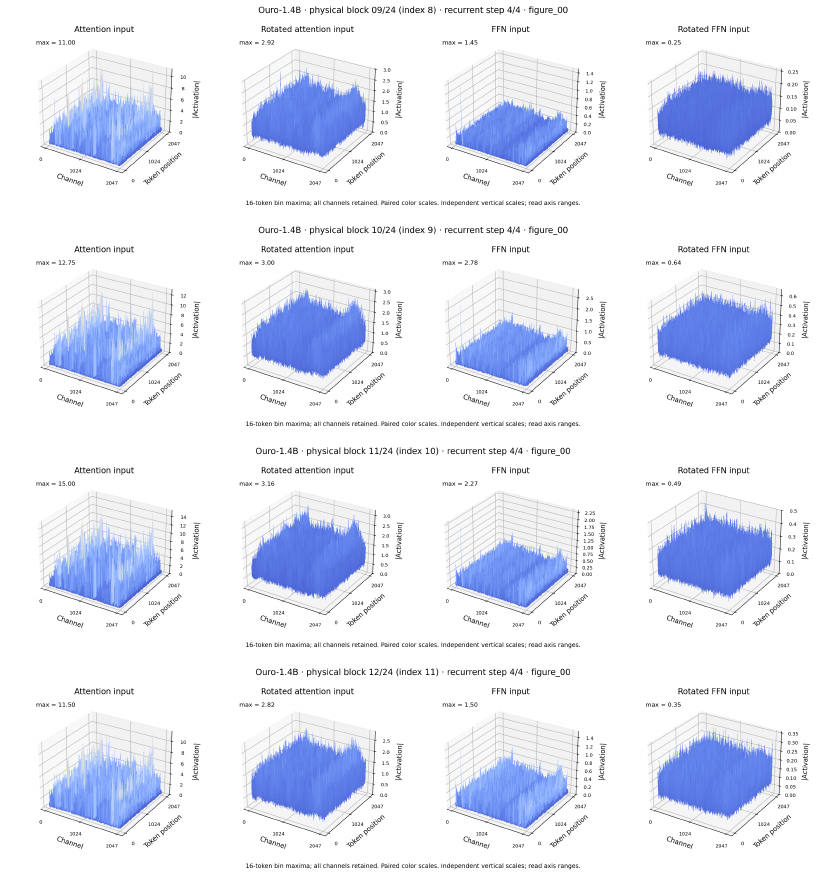}
\caption{\textbf{Activation distributions in blocks 9--12 of Ouro-1.4B at recurrent step 4.} Each row shows attention input, rotated attention input, FFN input, and rotated FFN input, from left to right.}
\label{fig:activation-blocks-09-12}
\end{figure}
\clearpage

\begin{figure}[!htbp]
\centering
\includegraphics[width=\linewidth]{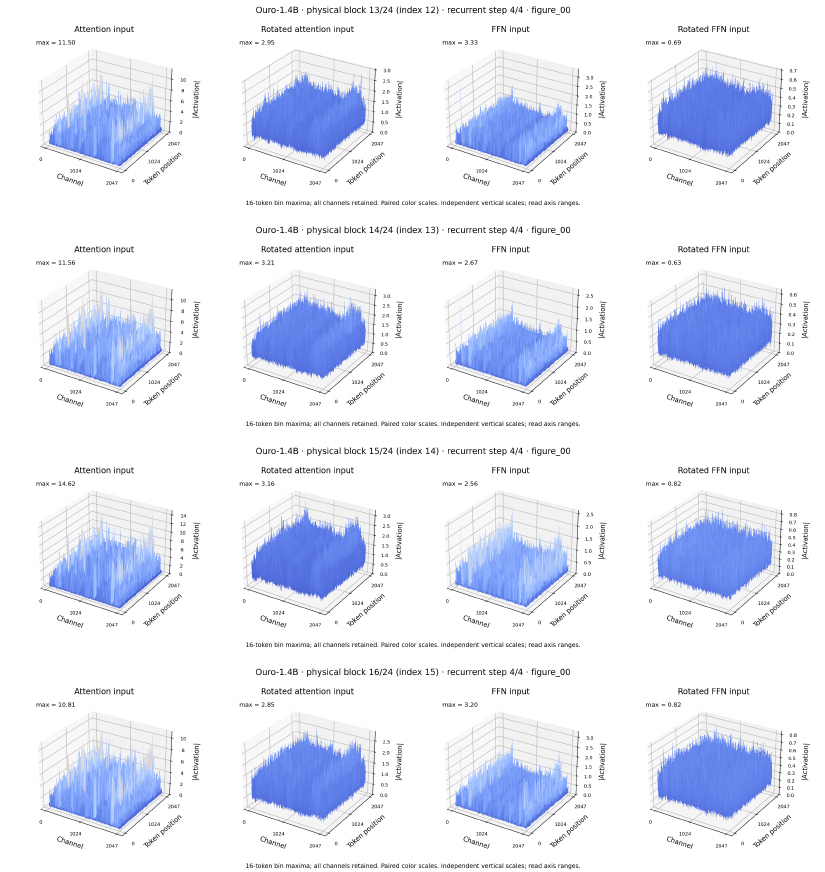}
\caption{\textbf{Activation distributions in blocks 13--16 of Ouro-1.4B at recurrent step 4.} Each row shows attention input, rotated attention input, FFN input, and rotated FFN input, from left to right.}
\label{fig:activation-blocks-13-16}
\end{figure}
\clearpage

\begin{figure}[!htbp]
\centering
\includegraphics[width=\linewidth]{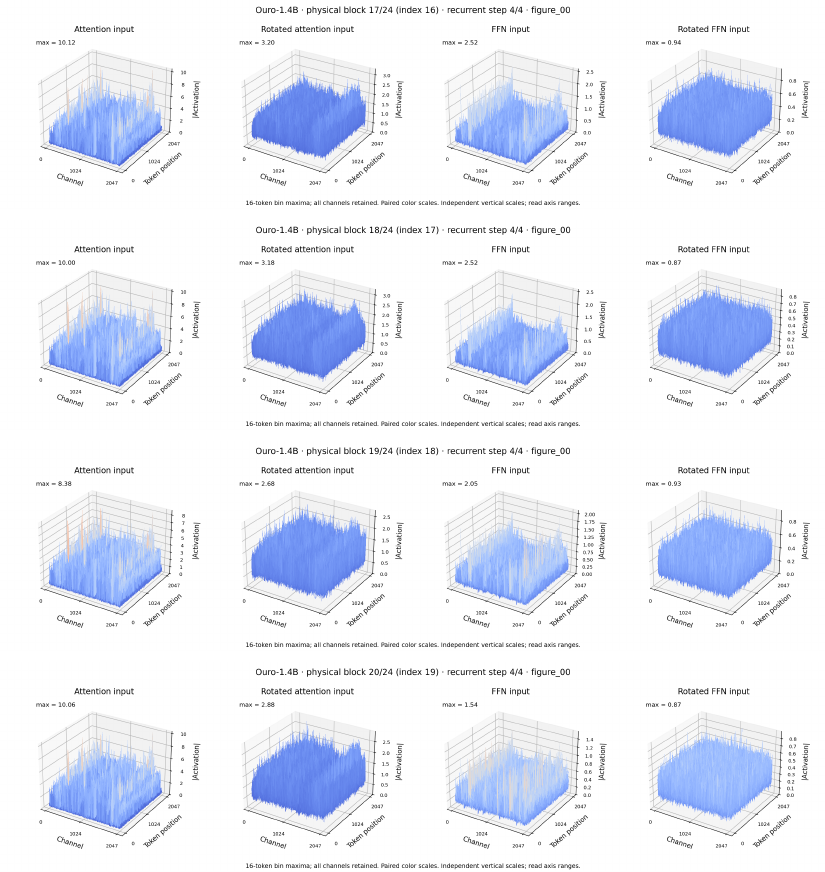}
\caption{\textbf{Activation distributions in blocks 17--20 of Ouro-1.4B at recurrent step 4.} Each row shows attention input, rotated attention input, FFN input, and rotated FFN input, from left to right.}
\label{fig:activation-blocks-17-20}
\end{figure}
\clearpage

\begin{figure}[!htbp]
\centering
\includegraphics[width=\linewidth]{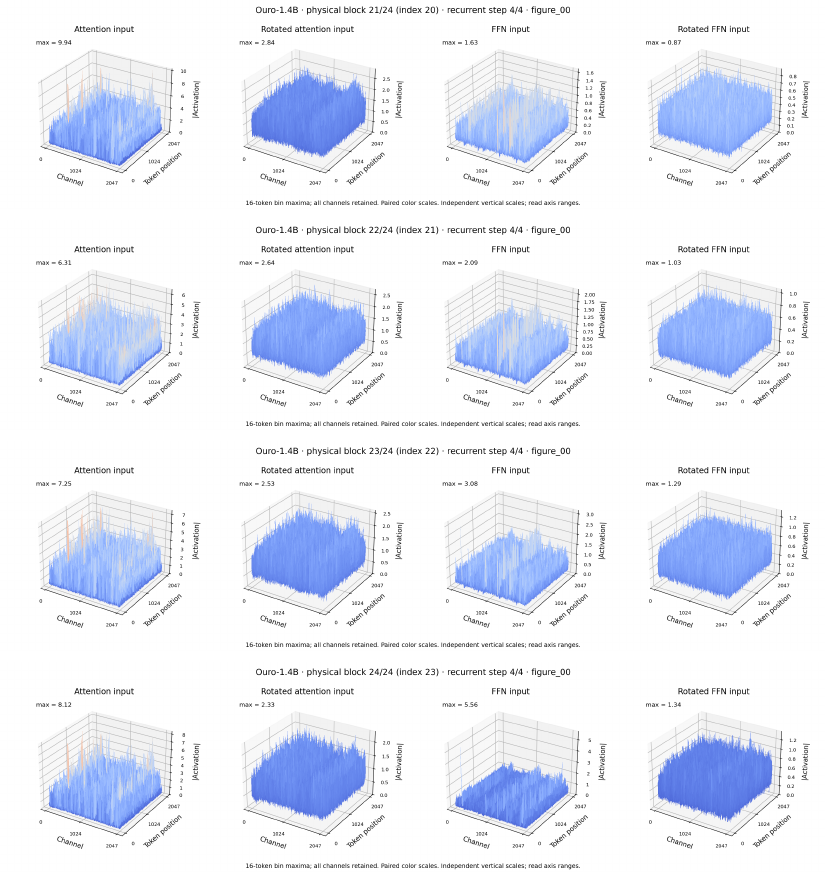}
\caption{\textbf{Activation distributions in blocks 21--24 of Ouro-1.4B at recurrent step 4.} Each row shows attention input, rotated attention input, FFN input, and rotated FFN input, from left to right.}
\label{fig:activation-blocks-21-24}
\end{figure}
\clearpage

\section{Activation Distributions Across Recurrent Steps}\label{app:activation-steps}
\subsection{Activation Profiles}
To complement the comparisons across physical blocks in Appendix~\ref{app:activation-distributions}, we visualize the input activations to the query projection in physical block 12 of Ouro-1.4B across four recurrent calls, using the same input window and the same model state as in the preceding activation analysis. The panels exhibit similar overall structure but differ in their local peaks and maximum magnitudes, illustrating that repeated use of a shared module does not imply identical activation patterns. These views provide a qualitative illustration of activation evolution during recurrent execution.

\begin{figure}[!htbp]
\centering
\includegraphics[width=0.9\linewidth]{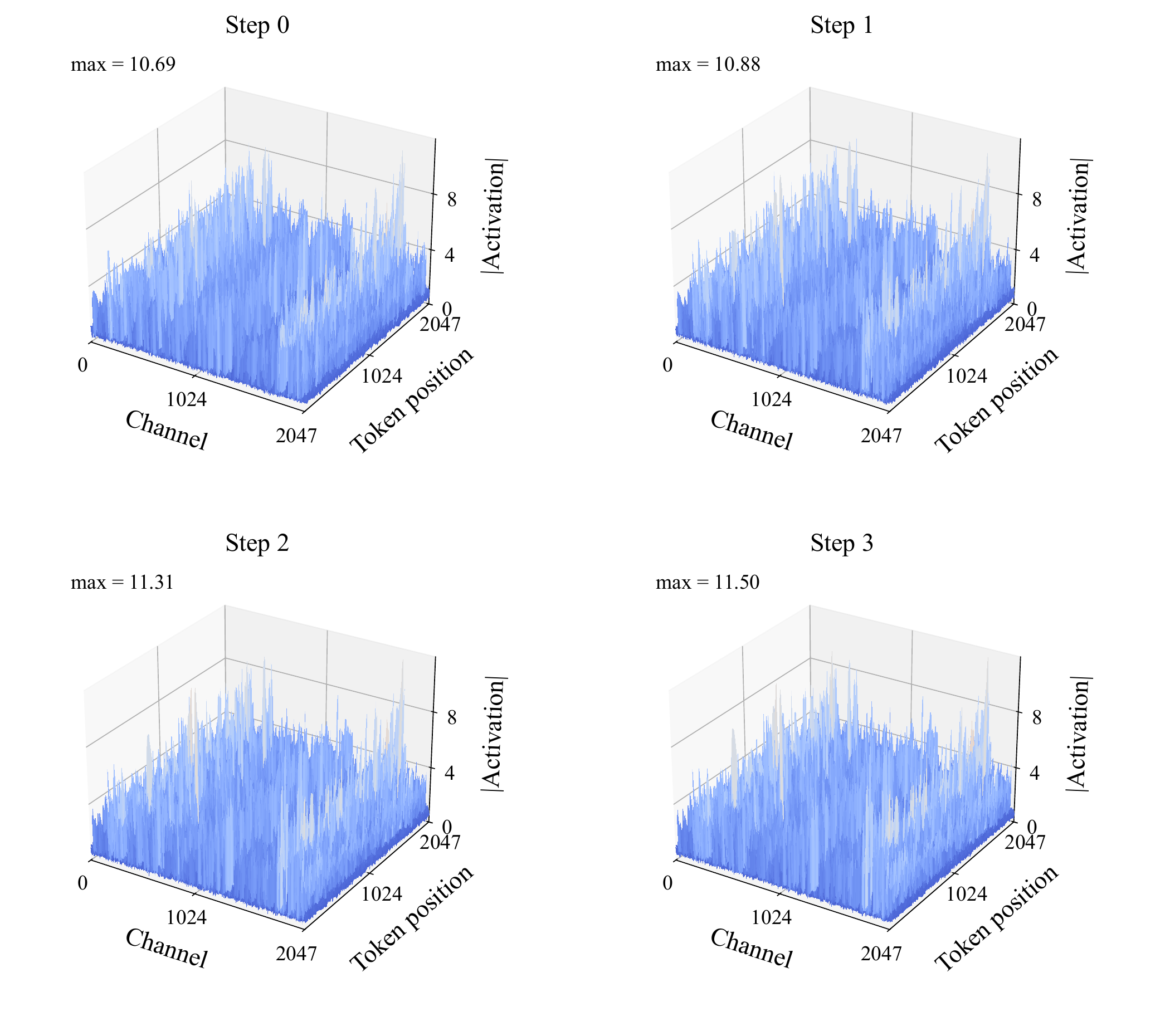}
\caption{\textbf{Activation distributions across recurrent steps.} Input activations to the query projection in physical block 12 of Ouro-1.4B for the same input window. Panels labeled Step 0--3 correspond to the first through fourth recurrent calls. The overall activation structure remains similar, while local peaks and maximum magnitudes vary across calls despite shared weights.}
\label{fig:activation-steps}
\end{figure}
\clearpage

\begin{figure}[!t]
\centering
\includegraphics[width=\linewidth]{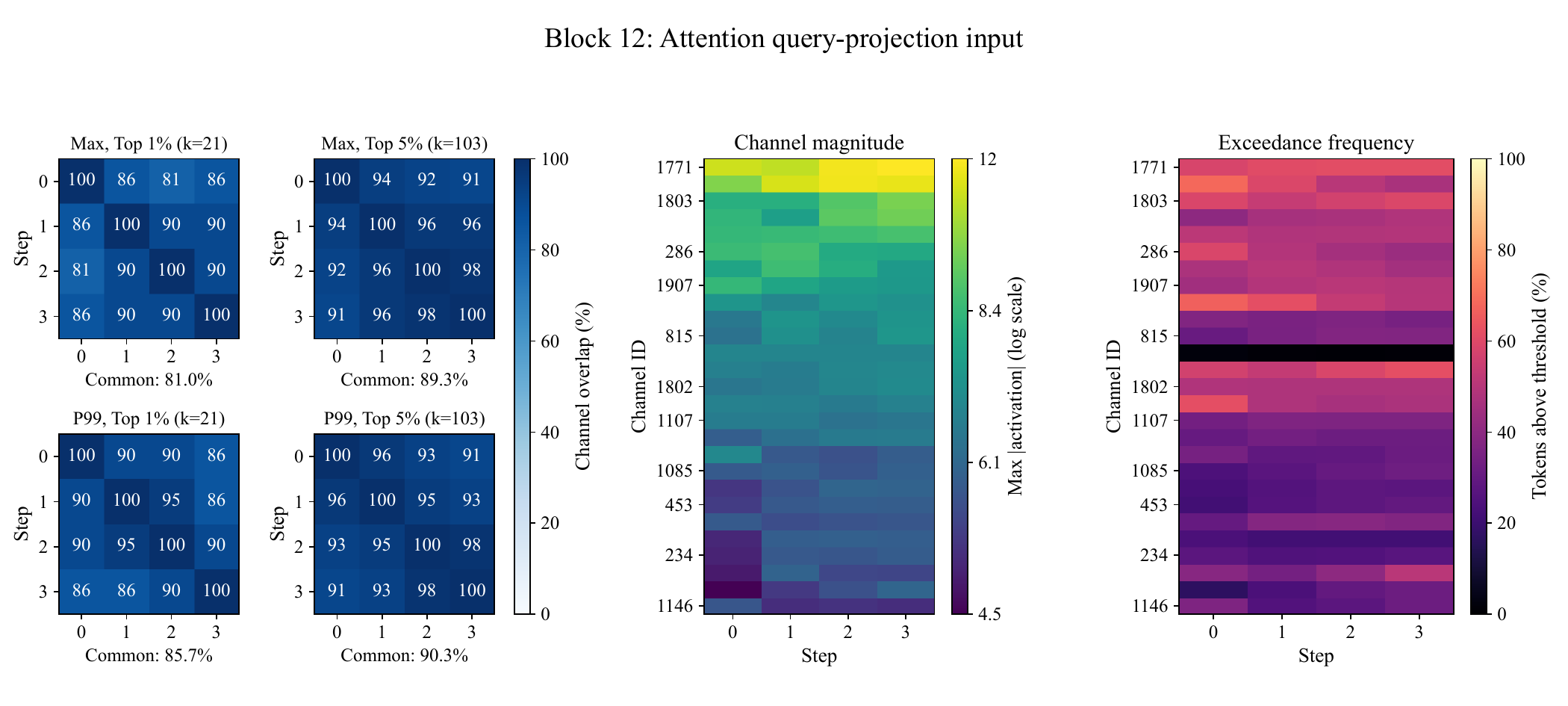}
\caption{\textbf{Outlier-channel stability at the block-12 query-projection input.} Left to right, stepwise Top-1\%/5\% channel overlap, maximum absolute activation for fixed channels (logarithmic color scale), and token exceedance frequency. The frequency threshold is 1.84375; the four-step common Max Top-1\% fraction is 81.0\%.}
\label{fig:channel-stability-query}
\medskip
\includegraphics[width=\linewidth]{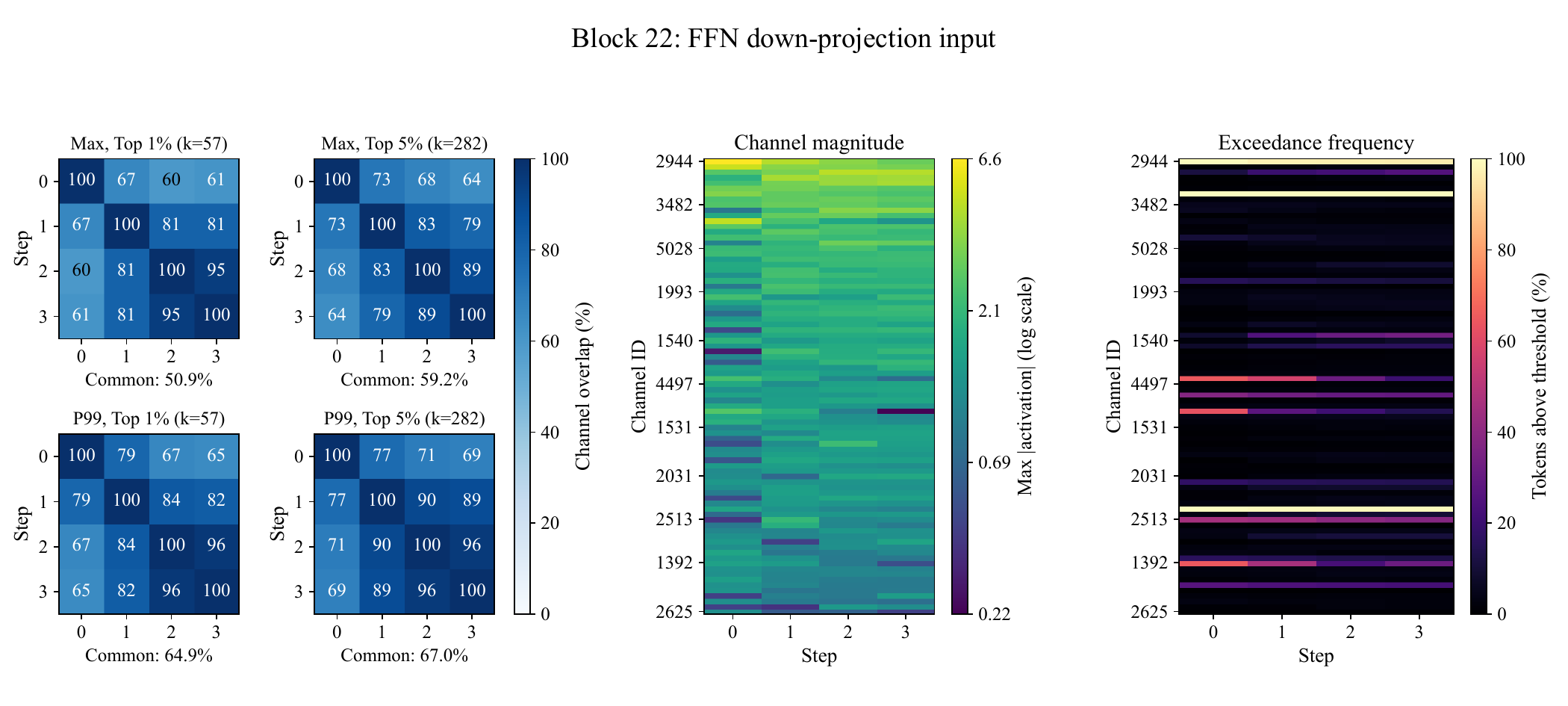}
\caption{\textbf{Outlier-channel stability at the block-22 down-projection input.} Panels follow the same layout as Figure~\ref{fig:channel-stability-query}. The frequency threshold is 0.11963. The four-step common Max Top-1\% fraction is 50.9\%, indicating greater channel-set variation than in the block-12 query case.}
\label{fig:channel-stability-down}
\end{figure}

\subsection{Outlier-Channel Stability}\label{app:channel-stability}
We examine inputs to the attention query projection in physical block 12 and the FFN down projection in physical block 22 of ByteDance Ouro-1.4B. Both cases use the same 2048-token window and an unquantized, unrotated BF16 forward pass. Steps 0--3 denote four recurrent calls. Channels are ranked separately at each step by their maximum (Max) or 99th-percentile (P99) absolute activation over tokens. Common denotes the four-step intersection size divided by the per-step Top-$k$ set size. The magnitude and frequency panels fix the union of the four Max Top-1\% sets, ordered by mean Max across steps (27 channels for block 12 and 84 for block 22). Frequency measures the fraction of tokens exceeding a case-specific threshold, fixed across steps at the pooled absolute-activation P99. The common Max Top-1\% fraction is 81.0\% in block 12 and 50.9\% in block 22. These cases show partial persistence of high-magnitude channels alongside changes in channel membership, magnitude, and exceedance frequency.

\clearpage

\section{Configuration Rankings Across Recurrent Depths}\label{app:configuration-rankings}
Figure~\ref{fig:accuracy-rank-depth} complements the absolute accuracies in Figure~\ref{fig:accuracy-depth} with within-depth rankings for the same 20 configurations of Ouro-1.4B W4A4 on WinoGrande. The fixed column order makes changes in relative performance visible as recurrent depth increases. The configuration $(p,s)=(0.8,2026)$ ranks first at depths 1, 3, and 4, while the marked winner at depth 2 is $(0.9,2026)$. Thus, the ordering varies across depths even though the shallowest and final-depth winners coincide in this experiment. Rank changes should be interpreted alongside absolute accuracy differences rather than as evidence of their statistical significance.

\begin{figure}[!htbp]
\centering
\includegraphics[width=\linewidth]{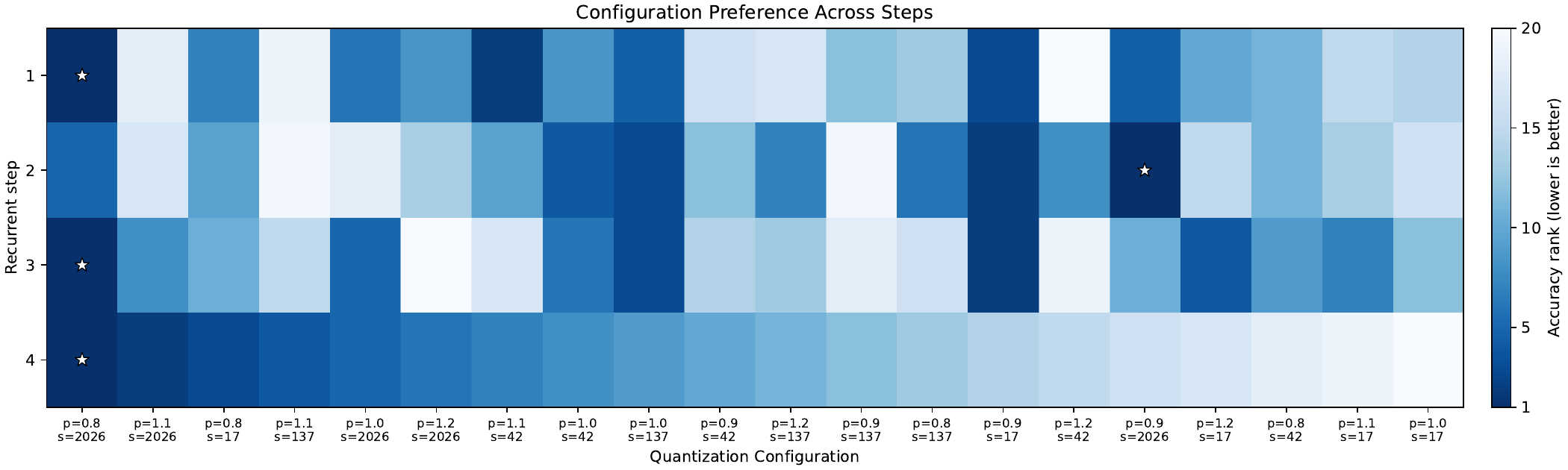}
\caption{\textbf{Within-depth configuration rankings.} WinoGrande accuracy ranks for Ouro-1.4B W4A4, using the same candidate pool and column order as Figure~\ref{fig:accuracy-depth}. Darker cells indicate better ranks (1 is best); stars mark the displayed top-ranked configuration at each depth.}
\label{fig:accuracy-rank-depth}
\end{figure}

\end{document}